\algrenewcommand\algorithmicindent{0.5em}%
\theoremstyle{plain}
\newtheorem{theorem}{Theorem}[section]
\newtheorem{definition}[theorem]{Definition}
\theoremstyle{remark}
\newcommand{\vtheta}{{ \boldsymbol \theta}}
\newcommand{\sxx}{\mathbf \Sigma_{\rv X \rv X}}
\newcommand{\sxz}{\mathbf \Sigma_{\rv X \rv Z}}
\newcommand{\swxz}{\mathbf \Sigma_{\mathbf{W} \rv X, \rv Z}}
\newcommand{\swxwx}{\mathbf \Sigma_{\mathbf{W} \rv X, \mathbf{W} \rv X}}
\newcommand{\szz}{\mathbf \Sigma_{\rv Z \rv Z}}
\newcommand{\pwxz}{\mathbf P_{\mathbf{W}\sxz}}
\newcommand{\optproblem}{
    \begin{align*}
        \mathop{\mathrm{argmin\:}}_{\substack{\mathbf{P} \in \mathbb{R}^{d \times d}}} \E \Big [ \big\| \mathbf{P}\rv X - \rv X \big\|^2_{\mathbf M} \Big ]\quad \mathrm{subject\:to}\:\: \mathrm{Cov}(\mathbf{P}\rv X, \rv Z) = \mathbf{0}
    \end{align*}}
\newcommand{\E}{\mathbb E}
\newcommand{\Ltriv}{L_\tau}
\newcommand{\Ltrivfull}{L_\tau^{(\rv Z, \loss)}}
\newcommand{\CLS}{\textsf{[CLS]}}
\crefname{section}{\S}{\S\S}
\Crefname{section}{\S}{\S\S}
\crefname{table}{Tab.}{}
\crefname{figure}{Fig.}{}
\crefname{algorithm}{Alg.}{}
\crefname{appendix}{App.}{}
\crefname{lemma}{Lemma}{}
\Crefname{theorem}{Theorem}{}
\crefname{prop}{Proposition}{}
\crefname{cor}{Corollary}{}
\crefname{myexample}{Example}{}
\crefname{align}{}{}
\crefname{equation}{}{}
\newcommand{\R}{\mathbb{R}}
\newcommand{\xx}{\boldsymbol{x}}
\newcommand{\zz}{\boldsymbol{z}}
\newcommand{\valpha}{\boldsymbol{\valpha}}
\let\iff\leftrightarrow
    \theoremstyle{plain}
    \newtheoremstyle{TheoremNum}
        {\topsep}{\topsep}              
        {\itshape}                      
        {}                              
        {\bfseries}                     
        {.}                             
        { }                             
        {\thmname{#1}\thmnote{ \bfseries #3}}
    \theoremstyle{TheoremNum}
\newcommand{\rank}{\mathrm{rank}}
\newcommand{\rv}[1]{{\mathrm{#1}}}
\renewcommand{\R}{{\color{black} \mathbb{R}}}
\newcommand{\loss}{\color{black} \mathcal{L}}
\title{LEACE: Perfect linear concept erasure in closed form}
\author{%
Nora Belrose$^1$ \quad David Schneider-Joseph$^1$ \quad Shauli Ravfogel$^2$ \quad Ryan Cotterell$^3$ \\ \textbf{Edward Raff}$^4$\quad \textbf{Stella Biderman}$^{1,4}$ \\
$^1$EleutherAI \quad $^2$Bar-Ilan University \quad $^3$ETH Zürich \quad $^4$Booz Allen Hamilton\\
\texttt{\{nora,stella\}@eleuther.ai} \quad \texttt{david@davidsj.com}
}
\begin{document}

\maketitle

\begin{abstract}
Concept erasure aims to remove specified features from an embedding. It can improve fairness (e.g. preventing a classifier from using gender or race) and interpretability (e.g. removing a concept to observe changes in model behavior). We introduce LEAst-squares Concept Erasure (LEACE), a closed-form method which provably prevents all linear classifiers from detecting a concept while changing the embedding as little as possible, as measured by a broad class of norms. We apply LEACE to large language models with a novel procedure called concept scrubbing, which erases target concept information from \emph{every} layer in the network. We demonstrate our method on two tasks: measuring the reliance of language models on part-of-speech information, and reducing gender bias in BERT embeddings. Our code is available at \url{https://github.com/EleutherAI/concept-erasure}.
\end{abstract}

\section{Introduction}

The ability to prevent a machine learning system from using a specified concept is important for fairness and interpretability. 
Popular notions of fairness require that protected attributes should not causally affect predictions \cite{kusner2017counterfactual, nilforoshan2022causal}, and interpretability research often estimates the causal effect of a concept by attempting to remove it from a model's internal activations \cite{elazar2021amnesic, ravfogel2021counterfactual, nikoulina2021rediscovery, dankers2022can, hernandez2021low}.

What it means for a model $\mathcal M$ to ``use'' a concept $\rv Z$ is often vague and application-specific, but a necessary condition is that its outputs---and therefore its inputs and hidden states---should have significant \emph{mutual information} with $\rv Z$.\footnote{This follows from the fact that causal dependence is a special kind of statistical dependence \cite{pearl09causality}. By the data processing inequality, $\mathcal M$'s output can't have any more information about $\rv Z$ than its input or hidden states.} \textbf{Concept erasure} leverages this fact to limit $\mathcal M$'s use of $\rv Z$ \emph{without} finetuning or inspecting its parameters. Instead, we edit the input or hidden states $\rv X$ used by $\mathcal M$ to minimize the predictive $\mathcal V$-information $I_{\mathcal V}(\rv X \rightarrow \rv Z)$ \cite{xu2020theory}, a tractable lower bound on the mutual information $I(\rv X; \rv Z)$ which measures the degree to which classifiers from the family $\mathcal V$ can predict $\rv Z$. Intuitively, if no classifier in $\mathcal V$ can outperform a constant function at predicting $\rv Z$---a condition known as \textbf{guardedness}---then $\mathcal M$ can't use $\rv Z$ either, at least if $\mathcal V$ is expressive enough relative to $\mathcal M$.\looseness=-1

In this work, we improve upon existing concept erasure techniques using a theory-driven approach. We focus on the case where $\mathcal V$ is the set of linear classifiers, and prove a previously unnoticed equivalence: a classification task is linearly guarded \emph{if and only if} every class has exactly the same mean feature vector (\cref{sec:theorems}). Leveraging this equivalence, we derive a simple necessary and sufficient condition for an affine transformation to produce linearly guarded features. We then identify the unique \emph{surgical} transformation in this family---the one that minimizes the mean squared distance from the original features with respect to \emph{all} norms induced by inner products, including the popular Euclidean and Mahalanobis norms. We name it \textbf{LEAst-squares Concept Erasure (LEACE)} (\cref{sec:leace}).

While prior work has focused on preventing linear models from leveraging $\rv Z$, we aim to erase concepts from deep neural networks as well. Interpretability research has shown that networks can be usefully described as encoding features in linear subspaces \cite{elhage2021mathematical, nanda_othello_2023, wu2023interpretability}, suggesting that fundamentally nonlinear methods may not be necessary for successful erasure in DNNs. In light of this, we introduce a simple procedure called \textbf{concept scrubbing} (\cref{sec:concept-scrubbing}), which sequentially applies LEACE to the activations at each layer of a deep network.

We empirically validate our proposals, demonstrating the superiority of LEACE for erasing gender bias from BERT embeddings (\cref{sec:fairness}), and using concept scrubbing to measure the extent to which large language models use part-of-speech information (\cref{sec:concept-scrubbing}).

\section{Preliminaries}\label{sec:preliminaries}

Consider a $k$-class classification task over jointly defined random vectors $\rv X$ (the input data) and $\rv Z$ (the one-hot labels), with $\rv X$ of finite first moment and taking values in $\mathbb R^d$, and $\rv Z$ taking values in $\mathcal Z = \{\mathbf z \in \mathbb \{0, 1\}^k\ \big|\ \|\mathbf z\|_1 = 1\}$\footnote{We frequently use the integer $j \le k$ to refer to the element of $\mathcal Z$ which is $1$ at the $j^\text{th}$ index and $0$ elsewhere.} with each $\mathbb P(\rv Z = j) > 0$. Let $\eta(\cdot; \vtheta): \mathbb R^d \to \mathbb R^k$ be a predictor chosen from a function class $\mathcal V = \{\eta(\cdot; \vtheta)\ |\ \vtheta \in \Theta\}$ (presumed to contain all constant functions) so as to minimize the expectation $\E \big[ \loss(\eta(\rv X), \rv Z) \big]$ of some $\loss: \mathbb R^k \times \mathcal Z \to [0, \infty)$ in a class $\mathfrak L$ of loss functions.

We borrow the concept of \textbf{guardedness} from \citet{ravfogel2022linear}, who define it in terms of $\mathcal V$-information \citep{xu2020theory}. We opt for a slightly more general definition here, which is equivalent to theirs in the case of cross-entropy loss (see \autoref{guard-eqv-def}).

\begin{definition}[Guardedness]\label{def:guardedness}
Let $\rv X$, $\rv Z$, $\mathcal V$, and $\mathfrak L$ be as defined above, and let $\chi$ be the set of all random vectors of finite first moment taking values in $\mathbb R^d$, jointly defined with $\rv Z$.

We say $\rv X$ $(\mathcal V, \mathfrak L)-$\textbf{guards} $\rv Z$ if, for all losses $\loss \in \mathfrak L$, it maximizes the minimum expected loss:
\begin{equation*}
    \rv X \in \mathop{\mathrm{argmax\:}}_{\substack{\rv X' \in \chi}} \inf_{\vtheta \in \Theta}\: \E \Big [ \loss(\eta(\rv X'; \vtheta), \rv Z) \Big ].
\end{equation*}
In other words, its conditional distribution $\mathbb P(\rv X \mid \rv Z = \cdot)$ is among the worst possible distributions for predicting $\rv Z$ from $\rv X$ using a predictor of the form $\eta(\cdot; \vtheta) \in \mathcal V$ and a loss function in $\mathfrak L$.
\end{definition}

\begin{definition}[Trivially Attainable Loss]\label{def:trivial_loss}
The \textbf{trivially attainable loss} for labels $\rv Z$ and loss $\loss$ is the lowest possible expected loss available to a constant predictor $\eta(\mathbf{x}) = \mathbf {b}$:
\begin{equation*}
    \Ltriv = \inf_{\mathbf{b} \in \mathbb R^k} \E [\loss(\mathbf b, \rv Z)]
\end{equation*}
We will sometimes write it $\Ltrivfull$ in cases of possible ambiguity. If there is a specific constant predictor actually achieving this loss, we call it the \textbf{trivial predictor} $\eta_\tau = \eta_\tau^{(\rv Z, \mathcal L)}$.
\end{definition}

We examine this problem in the important case of loss functions $\loss: \mathbb{R}^k \times \mathcal{Z} \to [0, \infty)$ which are convex in the prediction $\eta(\mathbf{x})$, and linear predictors that take the functional form $\eta(\mathbf{x}; \mathbf b, \mathbf W) = \mathbf b + \mathbf{W}\mathbf{x}$, for some bias $\mathbf b \in \mathbb{R}^{k}$ and weight matrix $\mathbf W \in \mathbb{R}^{k \times d}$.

\begin{definition}[Linear Guardedness]\label{def:linear-guardedness}

If $\rv X$ $(\mathcal V, \mathfrak L)$-guards $\rv Z$, where $\mathfrak L$ is the class of nonnegative loss functions which are convex in their first argument, and $\mathcal V$ is the class of linear predictors $\eta(\mathbf{x}) = \mathbf b + \mathbf{W}\mathbf{x}$, we say that $\rv X$ \textbf{linearly guards} $\rv Z$.
\end{definition}

\section{Theoretical Results}
\label{sec:theorems}

Our primary theoretical result is that the following conditions are all equivalent:
\begin{enumerate}
    \item \label{cond:lin-guarded} The data $\rv X$ linearly guards the labels $\rv Z$. (\cref{def:linear-guardedness})
    \item \label{cond:optimal-is-trivial} For all convex losses $\loss$, the trivially attainable loss is optimal on $(\rv X, \rv Z)$. (\cref{def:trivial_loss})
    \item \label{cond:class-means-equal} The class-conditional mean vectors $\E[\rv X \mid \rv Z = i]$ are equal to the unconditional mean $\E[\rv X]$.
    \item \label{cond:zero-cross-covar} Every component of $\rv X$ has zero covariance with every component of $\rv Z$.
    \item \label{cond:statistical-parity} Every linear classifier evaluated on $\rv X$ exhibits statistical parity w.r.t. $\rv Z$. (\cref{statistical-parity})
\end{enumerate}
The equivalence of conditions \ref{cond:lin-guarded}, \ref{cond:optimal-is-trivial}, and \ref{cond:statistical-parity} is relatively straightforward to show, and the relevant theorems can be found in Appendices~\ref{app:guarded-to-trivial} and \ref{statistical-parity}. The other equivalences are proven below (cond. \ref{cond:class-means-equal} $\iff$ cond. \ref{cond:optimal-is-trivial} in \cref{sec:equality-sufficient} and \cref{sec:necessity}); cond. \ref{cond:class-means-equal} $\iff$ \ref{cond:zero-cross-covar} in \cref{sec:zero-covariance}).

\subsection{Equality of Class Centroids Implies Linear Guardedness}\label{sec:equality-sufficient}

The following result establishes the implication from condition \ref{cond:class-means-equal} to condition \ref{cond:optimal-is-trivial}.

\begin{theorem}\label{equality-sufficient}
    Suppose $\loss$ is convex in the linear prediction $\eta$. Then if each class-conditional mean $\E\big[\rv X\mid \rv Z = i\big]$ is equal to $\E\big[\rv X\big]$, the trivially attainable loss cannot be improved upon.
\end{theorem}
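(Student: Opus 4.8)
The plan is to show that any linear predictor can be matched, in expected loss, by a suitable constant predictor — from which the claim is immediate, since $\Ltriv$ is by definition the best a constant predictor can do. Fix an arbitrary linear predictor $\eta(\mathbf x;\mathbf b,\mathbf W)=\mathbf b+\mathbf W\mathbf x$, and write $\mu=\E[\rv X]$ and $\mu_i=\E[\rv X\mid \rv Z=i]$; both exist because $\rv X$ has finite first moment and each $\mathbb P(\rv Z=i)>0$. The hypothesis is exactly that $\mu_i=\mu$ for every class $i$.

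First I would work conditionally on a fixed class $\rv Z=i$. Since $\loss(\cdot,i)$ is convex on $\mathbb R^k$ and, conditionally on $\rv Z=i$, the vector $\mathbf b+\mathbf W\rv X$ is integrable, Jensen's inequality gives
\[
\E\big[\loss(\mathbf b+\mathbf W\rv X,\, i)\ \big|\ \rv Z=i\big]\ \ge\ \loss\big(\mathbf b+\mathbf W\mu_i,\, i\big)\ =\ \loss\big(\mathbf b+\mathbf W\mu,\, i\big),
\]
where the equality uses $\mu_i=\mu$. The key point is that the right-hand side is the loss incurred on label $i$ by the \emph{constant} prediction $\mathbf b'\defeq\mathbf b+\mathbf W\mu\in\mathbb R^k$, which no longer depends on $\rv X$.

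Next I would average over $\rv Z$:
\begin{align*}
\E\big[\loss(\mathbf b+\mathbf W\rv X,\, \rv Z)\big]
&= \sum_i \mathbb P(\rv Z=i)\,\E\big[\loss(\mathbf b+\mathbf W\rv X,\, i)\ \big|\ \rv Z=i\big]\\
&\ge \sum_i \mathbb P(\rv Z=i)\,\loss(\mathbf b',\, i)\ =\ \E\big[\loss(\mathbf b',\, \rv Z)\big]\ \ge\ \Ltriv,
\end{align*}
the last step being the definition of $\Ltriv$ as an infimum over constant predictors. Since the linear predictor was arbitrary, $\Ltriv$ cannot be improved upon, which is the claim.

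The proof is essentially immediate once set up this way; the only delicate point — and the one I would be most careful about — is the measure-theoretic justification of the conditional Jensen step: one needs $\loss(\cdot,i)$ to be a genuine real-valued convex function on all of $\mathbb R^k$ (so Jensen applies to the pushforward of $\rv X\mid\rv Z=i$ through the affine map $\mathbf x\mapsto\mathbf b+\mathbf W\mathbf x$), and one needs the conditional mean $\mu_i$ to be well-defined, which is guaranteed by the finite-first-moment assumption together with $\mathbb P(\rv Z=i)>0$. Everything else is bookkeeping over the $k$ classes.
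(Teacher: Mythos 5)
Your proof is correct and follows essentially the same route as the paper's: condition on the class, apply Jensen's inequality to push the conditional expectation inside the convex loss, use the equal-means hypothesis to replace $\E[\rv X\mid\rv Z=i]$ by $\E[\rv X]$, and observe that the resulting lower bound is the loss of the constant predictor $\mathbf b+\mathbf W\E[\rv X]$, hence at least $\Ltriv$. Your explicit per-class decomposition and the remarks on integrability are just a more detailed writing of the paper's single chain of inequalities.
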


\begin{proof}
Let $\eta(\mathbf{x}) = \mathbf{b} + \mathbf{Wx}$ be any linear predictor. By Jensen's inequality,\footnote{Specifically, its generalization to convex functions over $\mathbb R^k$. See \citep{ferguson1967statistics} p. 76.} the loss with $\eta$ evaluated on $\rv X$ is lower bounded by the loss with $\eta$ evaluated on the unconditional mean of the data $\E\big[\rv X\big]$:
\begin{align*}
    \E \Big[\loss(\eta, \rv Z)\Big]
    = \mathbb E_{\rv Z} & \Big[ \E \Big[\loss(\eta, \rv Z) \big| \rv Z \Big] \Big] \\
    \ge \mathbb E_{\rv Z} & \Big[ \loss\Big( \E \big[ \eta \big| \rv Z \big], \rv Z \Big) \Big] \tag{Jensen's inequality} \\
    = \mathbb E_{\rv Z} & \Big[ \loss\Big( \mathbf b + \mathbf W \E \big[ \rv X \big| \rv Z \big], \rv Z \Big) \Big] \tag{linearity of $\eta$} \\
    = \mathbb E_{\rv Z} & \Big[ \loss\Big( \mathbf b + \mathbf W \E \big[ \rv X \big], \rv Z \Big) \Big]. \tag{by assumption}
\end{align*}

This in turn is the loss of the constant predictor $\eta'(\mathbf{x}) = \mathbf b + \mathbf W \E \big[ \rv X \big]$. Since the trivially attainable loss is the best that can be achieved by a constant predictor, and \textit{every} predictor's loss is lower bounded by that of some constant predictor, we cannot improve upon the trivially attainable loss.
\end{proof}

Intuitively, this shows that the classifier's expected loss is lower-bounded by the loss it would receive if each data point were replaced with the centroid of its class. But, if these centroids are all equal, the loss can't be any lower than what we'd get if every data point were replaced with the \emph{global} mean $\E[\rv X]$. In that case, the data points are indistinguishable and we can't do better than $\mathbf W = \mathbf 0$.

\subsection{Linear Guardedness Implies Equality of Class Centroids}\label{sec:necessity}

We now prove the implication from condition \ref{cond:optimal-is-trivial} to condition \ref{cond:class-means-equal}. Condition \ref{cond:optimal-is-trivial} applies when the trivially attainable loss is optimal for \textit{all} convex losses, including cross-entropy loss in particular. And if it holds for cross-entropy loss, we now show that condition \ref{cond:class-means-equal}---the class centroids are equal---must follow. First a more general lemma:

\begin{restatable}{lemma}{lemmanecessity}\label{necessity-lemma}
    Suppose $\loss$ has bounded partial derivatives, which when off-category never vanish and do not depend on the category, i.e.
        ${\partial \loss(\eta, z_1)}/{\partial \eta_i} = {\partial \loss(\eta, z_2)}/{\partial \eta_i} \neq 0$
    for all categories $z_1, z_2 \neq i$. If $\E \big[\loss(\eta, \rv Z)\big]$ is minimized among linear predictors by the constant predictor $\eta(\mathbf{x}) = \mathbf{b^*} + \mathbf{W^*x}$ with $\mathbf W^* = \mathbf{0}$, then each class-conditional mean $\E\big[\rv X|\rv Z = i\big]$ is equal to $\E\big[\rv X\big]$.
\end{restatable}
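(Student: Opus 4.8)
The plan is to read off the conclusion from the first-order optimality conditions at the minimizer. Consider the objective
\[
F(\mathbf b, \mathbf W) \;=\; \E\big[\loss(\mathbf b + \mathbf W \rv X,\, \rv Z)\big],
\]
a function on the open set $\mathbb R^{k} \times \mathbb R^{k \times d}$. By hypothesis its minimum is \emph{attained} at the interior point $(\mathbf b^*, \mathbf 0)$, so provided $F$ is differentiable there its gradient must vanish. Differentiability — and the exchange of $\partial$ with $\E$ — follows from the bounded partial derivatives of $\loss$ (which make $\loss$ Lipschitz in $\eta$, hence $F$ finite everywhere, and bound the relevant difference quotients by a constant times $|X_j|$, which is integrable since $\E\|\rv X\| < \infty$; apply dominated convergence). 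Writing $g_i(\mathbf z) \defeq \partial \loss(\mathbf b^*, \mathbf z)/\partial \eta_i$ and using the chain rule on $\eta_i = b_i + \sum_j W_{ij} X_j$, the stationarity of $F$ at $(\mathbf b^*, \mathbf 0)$ gives, for every index $i$ and every coordinate $j$,
\[
\E\big[g_i(\rv Z)\big] = 0 \qquad\text{and}\qquad \E\big[g_i(\rv Z)\, X_j\big] = 0 .
\]

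Next I would invoke the structural hypothesis on $\loss$: for each $i$, the quantity $g_i(\mathbf z)$ equals one fixed \emph{nonzero} constant $c_i$ for every off-category label $\mathbf z \neq i$, and some bounded but otherwise unknown value $d_i$ when $\mathbf z = i$. Conditioning each of the two equations above on $\rv Z$, the bias equation becomes
\[
d_i\,\mathbb{P}(\rv Z = i) + c_i\big(1 - \mathbb{P}(\rv Z = i)\big) = 0,
\]
and the weight equation becomes
\[
d_i\,\mathbb{P}(\rv Z=i)\,\E[X_j \mid \rv Z=i] + c_i\Big(\E[X_j] - \mathbb{P}(\rv Z=i)\,\E[X_j \mid \rv Z=i]\Big) = 0.
\]
Using the first identity to substitute for $d_i\,\mathbb{P}(\rv Z=i)$ in the second, all terms cancel except $c_i\big(\E[X_j] - \E[X_j \mid \rv Z=i]\big) = 0$; since $c_i \neq 0$ this forces $\E[X_j \mid \rv Z = i] = \E[X_j]$. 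As $i$ and $j$ were arbitrary, $\E[\rv X \mid \rv Z = i] = \E[\rv X]$ for every class $i$, which is the claim. (For $k = 1$ there is nothing to prove, since $\rv Z$ is then deterministic.)

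I expect the only genuine subtlety to be the analytic preliminaries — justifying that $F$ is finite and differentiable and that the interior minimizer is therefore a stationary point — which is exactly what the ``bounded partial derivatives'' hypothesis and the assumption that the minimum is attained are there to supply. The remainder is the elementary conditioning-and-cancellation above, whose one real idea is that the unknown on-category slopes $d_i$ are eliminated by combining the weight-optimality with the bias-optimality condition, so that only the nonvanishing off-category slopes $c_i$ survive and can be divided out.
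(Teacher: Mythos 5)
Your proposal is correct and follows essentially the same route as the paper's proof: interchange derivative and expectation via dominated convergence (using the bounded partials and $\E\|\rv X\|<\infty$), write the first-order conditions for the bias and the weights at the constant minimizer, split the expectations on $\rv Z = i$ versus $\rv Z \neq i$, eliminate the unknown on-category term using the bias equation, and divide by the nonvanishing off-category derivative. The only cosmetic difference is that you phrase the weight condition in terms of $\E[X_j]$ rather than $\E[X_j \mid \rv Z \neq i]$, which lets you skip the paper's explicit case $\mathbb P(\rv Z \neq i) = 0$.
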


\begin{proof}
\label{app:necessity-proof}
    The first-order optimality condition on the $i^\text{th}$ component of our parameters $\mathbf b$ and $\mathbf W$ yields the equations:
    \begin{equation}\label{opt}
        \E \Bigg[ \frac {\partial \loss(\eta, \rv Z)} {\partial \eta_i} \cdot \frac {\partial \eta_i} {\partial b_i} \Bigg] = 0 \quad \text{and} \quad \E \Bigg[ \frac {\partial \loss(\eta, \rv Z)} {\partial \eta_i} \cdot \frac {\partial \eta_i} {\partial \mathbf{W_i}} \Bigg] = \mathbf{0},
    \end{equation}
    where we have used the boundedness of $\loss$'s partial derivative and the finite first moment of $\frac {\partial \eta_i} {\partial b_i} = 1$ and $\frac {\partial \eta_i} {\partial \mathbf{W_i}} = \rv X$ to justify (via the Dominated Convergence Theorem) interchanging the derivative with the expectation.

    Since $\eta$ is constant over all values of $\rv X$, and $\frac {\partial \eta_i} {\partial b_i} = 1$, the first equation in (\ref{opt}) reduces to:    
    \begin{equation} \label{optBreduced}
        \mathbb P(\rv Z=i) \frac {\partial \loss(\eta, i)} {\partial \eta_i} +
            \mathbb P(\rv Z \neq i) \frac {\partial \loss(\eta, \neq i)} {\partial \eta_i} = 0,
    \end{equation}
    where $\frac {\partial \loss(\eta, \neq i)} {\partial \eta_i}$ is an abuse of notation denoting the off-category partial derivative, emphasizing its independence of the category $\rv Z$.

    Similarly, the constancy of $\eta$ and the fact that $\frac {\partial \eta_i} {\partial \mathbf{W_i}} = \rv X$ reduces the second equation in (\ref{opt}) to:
    \begin{equation} \label{optWreduced}
        \mathbb P(\rv Z=i) \frac {\partial \loss(\eta, i)} {\partial \eta_i} \cdot \E \big[ \rv X \big| \rv Z = i \big] + \\
        \mathbb P(\rv Z \neq i) \frac {\partial \loss(\eta, \neq i)} {\partial \eta_i} \cdot \E \big[ \rv X \big| \rv Z \neq i \big]
        = \mathbf{0}.
    \end{equation}
    Solving for $\mathbb P(\rv Z = i) \frac {\partial \loss(\eta, i)} {\partial \eta_i}$ in (\ref{optBreduced}) and substituting in (\ref{optWreduced}) gives us:
    \begin{equation*}
        \mathbb P(\rv Z \neq i) \frac {\partial \loss(\eta, \neq i)} {\partial \eta_i} \cdot \Bigg(
                    \E \big[ \rv X \big| \rv Z \neq i \big] -
                    \E \big[ \rv X \big| \rv Z = i \big]
        \Bigg) = \mathbf{0}.
    \end{equation*}
    If $\mathbb P(\rv Z \neq i) = 0$, then $\E[\rv X] = \E[\rv X|\rv Z = i]$ is trivially true. Otherwise, using the non-vanishingness of the off-category partial derivative $\frac {\partial \loss(\eta, \neq i)} {\partial \eta_i}$, division yields the equivalence of $\E \big[ \rv X \big| \rv Z = i \big]$ to $\E \big[ \rv X \big| \rv Z \neq i \big]$, and hence to the unconditional mean $\E \big[ \rv X \big]$.
\end{proof}

We now show that Lemma~\ref{necessity-lemma} applies to the widely used cross entropy loss:

\begin{restatable}{theorem}{celosstheorem}\label{ce-loss-theorem}
    If the class probabilities $\mathbb P(\rv Z = j)$ are all nonzero, and the trivially obtainable loss is optimal when $\loss(\eta, z) = - \log \frac {\exp(\eta_z)} {\sum_{i=1}^k \exp(\eta_i)}$, then each class has the same mean $\E\big[\rv X \big| \rv Z = z\big]$.
\end{restatable}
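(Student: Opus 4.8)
The plan is to apply \cref{necessity-lemma} verbatim, so the whole proof reduces to checking that cross-entropy loss meets its hypotheses and that the stated assumption delivers a zero-weight minimizer.

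\textbf{Step 1: derivative bookkeeping for cross-entropy.} Writing $\loss(\eta, z) = -\eta_z + \log \sum_{i=1}^k \exp(\eta_i)$, a direct computation gives
\begin{equation*}
    \frac{\partial \loss(\eta, z)}{\partial \eta_i} = \softmax(\eta)_i - \mathbb{1}[i = z].
\end{equation*}
From this I would read off all three derivative conditions of \cref{necessity-lemma}: every partial derivative lies in $[-1,1]$, hence is bounded; the off-category partial derivative (the case $i \neq z$) equals $\softmax(\eta)_i$, which is strictly positive and therefore never vanishes; and that expression does not depend on the category $z$. So \cref{necessity-lemma} applies to cross-entropy provided its optimality hypothesis is met.

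\textbf{Step 2: the trivially attainable loss is attained by a zero-weight predictor.} This is the one place the hypothesis $\mathbb P(\rv Z = j) > 0$ for all $j$ is used. Because every class probability is positive, the logits $b^*_j \defeq \log \mathbb P(\rv Z = j)$ are well defined, and the constant predictor $\eta(\mathbf x) = \mathbf b^*$ satisfies $\softmax(\mathbf b^*)_j = \mathbb P(\rv Z = j)$, so its expected loss equals the entropy of $\rv Z$; by Gibbs' inequality (nonnegativity of KL divergence) this is the smallest expected loss over all constant predictors, i.e. it equals $\Ltriv$ in the sense of \cref{def:trivial_loss}. Thus there is a trivial predictor, and it is a linear predictor with weight matrix $\mathbf W^* = \mathbf 0$. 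By the theorem's hypothesis that the trivially attainable loss is optimal, this predictor minimizes $\E[\loss(\eta, \rv Z)]$ among \emph{all} linear predictors.

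\textbf{Step 3: conclude.} With Steps 1 and 2 in hand, \cref{necessity-lemma} yields $\E[\rv X \mid \rv Z = i] = \E[\rv X]$ for every class $i$, so all class-conditional means coincide, as claimed. The only real obstacle is the translation in Step 2: \cref{necessity-lemma} is stated in terms of an actual constant minimizer with $\mathbf W^* = \mathbf 0$, whereas the theorem gives only optimality of the trivially attainable \emph{value}; bridging the two requires exhibiting the explicit softmax-matching constant predictor, which is exactly why the nonzero-probability assumption is needed. The derivative computation and the verification of boundedness, non-vanishing, and category-independence are routine.
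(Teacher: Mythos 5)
Your proposal is correct and follows essentially the same route as the paper's proof: verify the boundedness, non-vanishing, and category-independence of the cross-entropy partial derivatives, exhibit the constant predictor with logits $\log \mathbb P(\rv Z = j)$ as an actual minimizer attaining $\Ltriv$ (the paper calls it the trivial predictor $\eta_\tau$), and then invoke Lemma~\ref{necessity-lemma}. Your Step 2 merely spells out, via Gibbs' inequality, the optimality-among-constants claim that the paper asserts more tersely.
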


\begin{proof}
    In this case, the trivial predictor ${\eta_\tau(\rv Z)}_j = \log(\mathbb P(\rv Z = j))$ exists, achieving the trivially obtainable loss, which we have assumed optimal. Furthermore, $\loss$ has on-category partial derivative ${\partial \loss(\eta, i)}/{\partial \eta_i} = {\exp(\eta_i)}/{\sum_{j=1}^k \exp(\eta_j)} - 1 \in (-1, 0]$, and nonvanishing off-category partial derivative ${\partial \loss(\eta, \neq i)}/{\partial \eta_i} = {\exp(\eta_i)}/{\sum_{j=1}^k \exp(\eta_j)} \in (0, 1)$, both bounded, so the conditions of Lemma \ref{necessity-lemma} apply.
\end{proof}


\subsection{Linearly Guarded Labels Have Zero Covariance with the Features}\label{sec:zero-covariance}

The next theorem establishes the equivalence of conditions \ref{cond:class-means-equal} and \ref{cond:zero-cross-covar}.

\begin{theorem}\label{zero-covariance}
    Let $\rv X$ be a random vector taking values in $\mathbb R^d$ with finite first moment, and $\rv Z$ a random vector taking values in $\{0, 1\}^k$ with one-hot encoding, with each class probability $\mathbb P(\rv Z = j)$ being nonzero. Then the class-conditional means $\E[\rv X | \rv Z = j]$ are all equal to the unconditional mean $\E[\rv X]$ if and only if every component of $\rv X$ has zero covariance with every component of $\rv Z$, i.e. the cross-covariance matrix $\sxz$, whose $(i, j)^\text{th}$ entry is $\mathrm{Cov}(\rv X_i, \rv Z_j)$, is the zero matrix.
\end{theorem}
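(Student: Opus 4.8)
The plan is to reduce the statement to a single identity relating each entry of $\sxz$ to the corresponding class-conditional mean deviation, and then read off both directions. First I would fix indices $i \le d$ and $j \le k$ and rewrite the $(i,j)^\text{th}$ entry of the cross-covariance matrix as $\mathrm{Cov}(\rv X_i, \rv Z_j) = \E[\rv X_i \rv Z_j] - \E[\rv X_i]\,\E[\rv Z_j]$. The one-hot structure of $\rv Z$ is what makes this tractable: $\rv Z_j$ is the indicator of the event $\{\rv Z = j\}$, so $\E[\rv Z_j] = \mathbb P(\rv Z = j)$ and, by the law of total expectation, $\E[\rv X_i \rv Z_j] = \E[\rv X_i \mid \rv Z = j]\,\mathbb P(\rv Z = j)$. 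Substituting gives the clean factorization
\begin{equation*}
    \mathrm{Cov}(\rv X_i, \rv Z_j) = \mathbb P(\rv Z = j)\Big(\E[\rv X_i \mid \rv Z = j] - \E[\rv X_i]\Big).
\end{equation*}

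From here both implications are immediate. If all class-conditional means equal the unconditional mean, the parenthesized factor vanishes for every $i, j$, so $\sxz = \mathbf 0$. Conversely, if $\sxz = \mathbf 0$, then for each $i, j$ the product on the right is zero; since $\mathbb P(\rv Z = j) > 0$ by hypothesis, the second factor must vanish, i.e. $\E[\rv X_i \mid \rv Z = j] = \E[\rv X_i]$. Collecting over all coordinates $i$ yields $\E[\rv X \mid \rv Z = j] = \E[\rv X]$ as vectors, and collecting over all $j$ gives the full claim.

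The only points needing a word of care are measure-theoretic well-definedness: I would note that $\E[\rv X]$ exists because $\rv X$ has finite first moment, that $\E[\rv X_i \rv Z_j]$ exists because $|\rv X_i \rv Z_j| \le |\rv X_i|$, and that $\mathrm{Cov}(\rv X_i, \rv Z_j) = \E[(\rv X_i - \E\rv X_i)(\rv Z_j - \E\rv Z_j)]$ is finite since $|\rv Z_j - \E\rv Z_j| \le 1$ forces the integrand to be dominated by $|\rv X_i - \E\rv X_i|$ — so no second-moment assumption on $\rv X$ is required. I do not anticipate a genuine obstacle here; the content of the theorem is essentially the single factorization identity above, and the hypothesis $\mathbb P(\rv Z = j) > 0$ is exactly what is needed to divide it out in the reverse direction.
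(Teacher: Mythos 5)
Your proposal is correct and follows essentially the same route as the paper's proof: both rewrite each entry of $\sxz$ via the one-hot structure as $\mathrm{Cov}(\rv X_i, \rv Z_j) = \mathbb P(\rv Z = j)\big(\E[\rv X_i \mid \rv Z = j] - \E[\rv X_i]\big)$ and then use $\mathbb P(\rv Z = j) > 0$ to conclude both directions. Your added remarks on integrability are a fine touch but not a substantive difference.
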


\begin{proof}
    Since $\rv Z$ is one-hot, we can rewrite the $(i, j)^\text{th}$ entry of $\sxz$ as:
    \begin{align*}
        \E[\rv X_i \rv Z_j] - \E[\rv X_i] \E[\rv Z_j]
        = \mathbb P(\rv Z = j) \Big( \E[\rv X_i | \rv Z = j] - \E[\rv X_i] \Big).
    \end{align*}
    As $\mathbb P(\rv Z = j) > 0$, it follows that $\E[\rv X_i | \rv Z = j] = \E[\rv X_i]$ if and only if $\mathrm{Cov}(\rv X_i, \rv Z_j) = 0$.
\end{proof}

We have thus established the equivalence of the first four conditions stated earlier. See Appendix~\ref{statistical-parity} for the last one, on statistical parity.



%

\section{Least-Squares Concept Erasure}
\label{sec:leace}
In Section~\ref{sec:theorems} we saw that $\rv X$ linearly guards $\rv Z$ if and only if each component of $\rv X$ has zero covariance with each component of $\rv Z$. We will now characterize the set of affine transformations $r(\mathbf{x}) = \mathbf{P}\xx + \mathbf{b}$ such that $r(\rv X)$ linearly guards $\rv Z$.

\begin{theorem}\label{erasure-family}
    Let $\rv X$ and $\rv Z$ be random vectors taking values in $\mathbb R^d$ and $\mathbb R^k$ respectively, with $\rv X$ of finite first moment. Then given some affine function $r(\xx) = \mathbf{P}\xx + \mathbf{b}$, the modified random vector $r(\rv X)$ linearly guards $\rv Z$ if and only if the columns of the cross-covariance matrix $\sxz$ are contained in the null space of $\mathbf{P}$.
\end{theorem}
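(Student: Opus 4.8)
The plan is to reduce the statement to the equivalence already established in \cref{sec:theorems}: a random vector of finite first moment linearly guards $\rv Z$ if and only if its cross-covariance matrix with $\rv Z$ is zero (condition \ref{cond:lin-guarded} $\iff$ condition \ref{cond:zero-cross-covar}, obtained by combining \cref{zero-covariance} with the results of \cref{sec:equality-sufficient} and \cref{sec:necessity}, under the standing assumptions that $\rv Z$ is one-hot with every class probability positive). I would apply this equivalence to the random vector $r(\rv X) = \mathbf{P}\rv X + \mathbf{b}$. Note first that $r(\rv X)$ indeed has finite first moment, since $\rv X$ does and $r$ is affine, so the equivalence is applicable. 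Thus $r(\rv X)$ linearly guards $\rv Z$ if and only if $\mathbf{\Sigma}_{r(\rv X)\,\rv Z} = \mathbf{0}$.

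The second step is a one-line computation using that covariance is invariant under translations of either argument and linear in each argument:
\[
  \mathbf{\Sigma}_{r(\rv X)\,\rv Z} = \mathrm{Cov}(\mathbf{P}\rv X + \mathbf{b},\, \rv Z) = \mathbf{P}\,\mathrm{Cov}(\rv X, \rv Z) = \mathbf{P}\,\sxz .
\]
All entries here are well defined: since $\rv Z$ is bounded, $\E[\rv X_i \rv Z_j]$ exists whenever $\E|\rv X_i|$ does, which is guaranteed by the finite-first-moment hypothesis. Combining with the previous step, $r(\rv X)$ linearly guards $\rv Z$ if and only if $\mathbf{P}\,\sxz = \mathbf{0}$.

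The last step translates the matrix equation into the stated geometric condition. Writing $\sxz$ in terms of its columns, the $j$-th column of the product $\mathbf{P}\,\sxz$ is $\mathbf{P}$ applied to the $j$-th column of $\sxz$; hence $\mathbf{P}\,\sxz = \mathbf{0}$ holds exactly when $\mathbf{P}$ annihilates each column of $\sxz$, i.e. when every column of $\sxz$ lies in $\nullspace(\mathbf{P})$. Chaining the three equivalences gives the theorem.

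I do not anticipate a genuine obstacle: all the real content is front-loaded into the earlier equivalence theorems, and what remains is bookkeeping. The only points deserving a moment of care are (i) verifying that $r(\rv X)$ still meets the hypotheses under which the guardedness equivalence was proved, and (ii) making explicit that the bias $\mathbf{b}$ drops out of the covariance, so that whether $r(\rv X)$ linearly guards $\rv Z$ depends only on $\mathbf{P}$ — a fact worth flagging since it is exactly what frees up $\mathbf{b}$ to be chosen later (e.g.\ to match means) in \cref{sec:leace}.
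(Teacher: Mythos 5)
Your proposal is correct and follows essentially the same route as the paper: reduce to the zero cross-covariance characterization of linear guardedness and compute $\mathrm{Cov}(\mathbf{P}\rv X + \mathbf{b}, \rv Z) = \mathbf{P}\sxz$, so that guardedness of $r(\rv X)$ holds iff $\mathbf{P}$ annihilates $\mathrm{colsp}(\sxz)$. Your version is if anything slightly more explicit than the paper's about the chain of equivalences being invoked and about the finite-first-moment and bias-invariance bookkeeping.
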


\begin{proof}
    From Theorem~\ref{zero-covariance} we know that $r(\rv X)$ linearly guards $\rv Z$ if and only if $\mathrm{Cov}(r(\rv X), \rv Z)$ is the zero matrix. By the linearity property of cross-covariance, we have:
    \begin{align*}
        \mathrm{Cov}(r(\rv X), \rv Z) = \mathrm{Cov}(\mathbf{P}\rv X + \mathbf{b}, \rv Z) = \mathbf{P}\mathrm{Cov}(\rv X, \rv Z) = \mathbf{P}\sxz.
    \end{align*}
    Therefore, $r(\rv X)$ linearly guards $\rv Z$ if and only if $\ker(\mathbf{P}) \supseteq \mathrm{colsp}(\sxz)$.
\end{proof}

\textbf{Implications for prior work.}
Notably, the above theorems imply that three previously proposed methods in the literature, Spectral Attribute Removal (SAL) \cite{shao-etal-2023-gold}, Mean Projection \cite{haghighatkhah2022better}, and Fair PCA \cite{kleindessner2023efficient}, are guaranteed to achieve linear guardedness given suitable hyperparameters. See Appendix~\ref{app:prior-work-optimality} for further discussion.

\subsection{Derivation of LEACE}\label{sec:least-squares-soln}

Theorem~\ref{erasure-family} is a very weak condition, which is far from identifying unique values for $\mathbf{P}$ and $\mathbf{b}$. In most applications, however, we'd like to make a ``small'' edit to $\rv X$ so that useful information contained in $\rv X$ is maximally preserved. We operationalize the notion of a small edit in terms of the mean squared norm $\E \| r(\rv X) - \rv X \|^2_{\mathbf M}$ defined by some positive-definite inner product $\mathbf M$,\footnote{Our proofs also include degenerate ``inner products'' where $\mathbf{M}$ is singular, and the associated seminorms.} which can be thought of as a local quadratic approximation to \textit{any} measure of divergence between $\rv X$ and $r(\rv X)$ (such as Kullback–Leibler divergence, for example). While we are primarily interested in the Euclidean ($\mathbf{M} = \mathbf{I}$) and Mahalanobis ($\mathbf{M} = \sxx^+$) norms, it will turn out that there is a \emph{single} erasure function that minimizes \emph{all} such norms simultaneously. We will see in Section~\ref{sec:concept-scrubbing} that ensuring edits are small in this sense provides substantial benefit to downstream task performance as compared to other methods which also guard the labels $\rv Z$.

Below, we derive the optimal eraser under the assumption that $\rv X$ and $\rv Z$ are centered.

\begin{restatable}{theorem}{thmltwooptimality}
    \label{l2-optimality}
    Let $\rv X$ and $\rv Z$ be centered random vectors taking values in $\mathbb R^d$ and $\mathbb R^k$ respectively, each of finite second moment. Let $\mathbf M \in \R^{d \times d}$ be a p.s.d. matrix defining a (possibly degenerate) inner product on $\mathbb R^d$: $\langle \mathbf{x}, \mathbf{y} \rangle_\mathbf{M} = \mathbf{x}^T\mathbf{My}$. Let $\sxx \in \mathbb R^{d \times d}$ be $\rv X$'s covariance matrix, and $\sxz \in \mathbb R^{d \times k}$ be the cross-covariance matrix of $\rv X$ and $\rv Z$. Let $\mathbf A^+$ denote the Moore-Penrose pseudoinverse of a matrix $\mathbf A$, and let $\mathbf A^{1/2}$ be the p.s.d. square root of a p.s.d. matrix $\mathbf A$. Then the objective
    \optproblem
    has the following solution:
    \begin{align*}
        \mathbf{P^*} = \mathbf{I} - \mathbf{W}^+ \pwxz\mathbf{W},
    \end{align*}
    where $\mathbf{W}$ is the whitening transformation $(\sxx^{1/2})^+$ and $\pwxz = (\mathbf{W}\sxz)(\mathbf{W}\sxz)^+$ is the orthogonal projection matrix onto $\mathrm{colsp}(\mathbf{W}\sxz)$.
\end{restatable}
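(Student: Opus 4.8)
The plan is to convert the problem into a convex quadratic program over a linear subspace of matrices, and then verify that $\mathbf{P}^*$ satisfies the first-order (stationarity) conditions, which by convexity is enough for global optimality.

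\emph{Step 1 — reduction.} Since $\rv X$ is centered with finite second moment, $\E[\rv X\rv X^T]=\sxx$, and expanding the $\mathbf{M}$-norm and using linearity of expectation,
\[
    \E\big[\|\mathbf{P}\rv X-\rv X\|_{\mathbf{M}}^2\big]
    =\mathrm{tr}\!\big(\mathbf{M}(\mathbf{P}-\mathbf{I})\sxx(\mathbf{P}-\mathbf{I})^T\big)=:f_{\mathbf{M}}(\mathbf{P}).
\]
Also $\mathrm{Cov}(\mathbf{P}\rv X,\rv Z)=\mathbf{P}\sxz$, so the feasible set is the linear subspace $\mathcal{C}=\{\mathbf{P}\in\R^{d\times d}:\mathbf{P}\sxz=\mathbf{0}\}$. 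Each $f_{\mathbf{M}}$ is a convex quadratic in $\mathbf{P}$ (its Hessian is p.s.d., being built from $\mathbf{M}$ and $\sxx$), so a feasible $\mathbf{P}$ minimizes $f_{\mathbf{M}}$ over $\mathcal{C}$ if and only if its directional derivative vanishes along every $\mathbf{H}\in\mathcal{C}$.

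\emph{Step 2 — the stationarity condition, and why it is $\mathbf{M}$-free.} A short computation gives $\tfrac{d}{dt}\big|_{0} f_{\mathbf{M}}(\mathbf{P}+t\mathbf{H})=2\langle \mathbf{M}(\mathbf{P}-\mathbf{I})\sxx,\mathbf{H}\rangle$ with $\langle\mathbf{A},\mathbf{B}\rangle=\mathrm{tr}(\mathbf{A}^T\mathbf{B})$. Since $\mathcal{C}$ is exactly the set of matrices all of whose rows lie in $\mathrm{colsp}(\sxz)^\perp$, a dimension count shows its Frobenius-orthogonal complement is $\{\mathbf{G}:\mathrm{rowsp}(\mathbf{G})\subseteq\mathrm{colsp}(\sxz)\}$. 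Hence $\mathbf{P}$ is stationary iff $\mathrm{rowsp}\big(\mathbf{M}(\mathbf{P}-\mathbf{I})\sxx\big)\subseteq\mathrm{colsp}(\sxz)$, and because $\mathrm{rowsp}(\mathbf{M}\mathbf{G})\subseteq\mathrm{rowsp}(\mathbf{G})$ it \emph{suffices} to establish the containment
\[
    \mathrm{rowsp}\big((\mathbf{P}-\mathbf{I})\sxx\big)\subseteq\mathrm{colsp}(\sxz),
\]
which does not involve $\mathbf{M}$. This is precisely the mechanism behind the ``all norms simultaneously'' claim: once this holds for $\mathbf{P}^*$, the same $\mathbf{P}^*$ is optimal for every inner product $\mathbf{M}$.

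\emph{Step 3 — verification for $\mathbf{P}^*$.} Write $\mathbf{S}=\sxx^{1/2}$, so $\mathbf{W}=\mathbf{S}^+$ and (as $\mathbf{S}$ is symmetric p.s.d.) $\mathbf{W}^+=\mathbf{S}$, giving $\mathbf{P}^*=\mathbf{I}-\mathbf{S}\,\pwxz\,\mathbf{W}$. Let $\Pi=\mathbf{S}\mathbf{W}=\mathbf{W}\mathbf{S}$ be the orthogonal projector onto $\mathrm{colsp}(\mathbf{S})=\mathrm{colsp}(\sxx)$. Two facts are needed: (i) $\mathrm{colsp}(\sxz)\subseteq\mathrm{colsp}(\sxx)$, hence $\Pi\sxz=\sxz$ — because any $\vv\perp\mathrm{colsp}(\sxx)$ has $\E[(\vv^T\rv X)^2]=\vv^T\sxx\vv=0$, forcing $\vv^T\rv X=0$ a.s. and thus $\vv^T\sxz=\mathbf{0}$; and (ii) $\pwxz(\mathbf{W}\sxz)=\mathbf{W}\sxz$ with $\mathrm{colsp}(\pwxz)=\mathrm{colsp}(\mathbf{W}\sxz)\subseteq\mathrm{colsp}(\mathbf{S})$, since $\pwxz$ projects orthogonally onto $\mathrm{colsp}(\mathbf{W}\sxz)$ and $\mathrm{colsp}(\mathbf{W})=\mathrm{colsp}(\mathbf{S})$. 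Feasibility follows from $\mathbf{P}^*\sxz=\sxz-\mathbf{S}\,\pwxz(\mathbf{W}\sxz)=\sxz-\mathbf{S}(\mathbf{W}\sxz)=\sxz-\Pi\sxz=\mathbf{0}$. For the containment, $\sxx=\mathbf{S}^2$ and $\mathbf{W}\sxx=\Pi\mathbf{S}=\mathbf{S}$, so $(\mathbf{P}^*-\mathbf{I})\sxx=-\mathbf{S}\,\pwxz\,\mathbf{W}\sxx=-\mathbf{S}\,\pwxz\,\mathbf{S}$, which is symmetric; hence
\[
    \mathrm{rowsp}\big((\mathbf{P}^*-\mathbf{I})\sxx\big)=\mathrm{colsp}(\mathbf{S}\,\pwxz\,\mathbf{S})=\mathbf{S}\big(\mathrm{colsp}(\mathbf{W}\sxz)\big)=\mathrm{colsp}(\Pi\sxz)=\mathrm{colsp}(\sxz),
\]
using (i) and (ii). This establishes the containment, hence stationarity of $\mathbf{P}^*$ for every $\mathbf{M}$, hence global optimality. (Geometrically, $\mathbf{P}^*\rv X=\mathbf{W}^+(\mathbf{I}-\pwxz)\mathbf{W}\rv X$ almost surely: whiten, orthogonally project off the directions correlated with $\rv Z$, then unwhiten.)

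\emph{Expected main obstacle, and a uniqueness remark.} The calculus and the convexity argument are routine; the real care lies in the rank-deficient linear algebra — the pseudoinverse identities ($\mathbf{W}^+=\sxx^{1/2}$, $\mathbf{S}\mathbf{W}$ being the projector onto $\mathrm{colsp}(\sxx)$, $\pwxz$ absorbing $\mathbf{W}\sxz$, and the colspace manipulations above) and, above all, the containment $\mathrm{colsp}(\sxz)\subseteq\mathrm{colsp}(\sxx)$, without which several cancellations break. Note that $f_{\mathbf{M}}$ need not be \emph{strictly} convex (when $\mathbf{M}$ or $\sxx$ is singular), so the minimizing \emph{matrix} is not unique in general; one should add a short argument that the induced random vector $\mathbf{P}^*\rv X$ — equivalently, the restriction of $\mathbf{P}^*$ to $\mathrm{colsp}(\sxx)$ — is the unique minimizer whenever $\mathbf{M}$ is positive definite on $\mathrm{colsp}(\sxx)$, which covers the Euclidean and Mahalanobis cases of interest.
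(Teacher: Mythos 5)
Your proof is correct, and it takes a genuinely different route from the paper, which gives two constructive arguments: one that diagonalizes $\mathbf M$, splits the objective into independent row-wise subproblems, and solves each in an explicit basis adapted to $\mathrm{colsp}(\mathbf W)$ and $\mathrm{colsp}(\swxz)$; and one that reinterprets each row as a covector defining a scalar random variable and performs an orthogonal projection in the covariance inner product after a whitening change of basis. You instead treat the problem as a convex quadratic program over the linear subspace $\{\mathbf P : \mathbf P\sxz = \mathbf 0\}$ and \emph{verify} that the stated $\mathbf{P^*}$ satisfies the first-order condition, which by convexity is sufficient for global optimality; the key observation that stationarity for every $\mathbf M$ is implied by the single $\mathbf M$-free containment $\mathrm{rowsp}\big((\mathbf{P^*}-\mathbf{I})\sxx\big)\subseteq\mathrm{colsp}(\sxz)$ is a clean way to make the ``optimal for all inner products simultaneously'' claim transparent, and your rank-deficient bookkeeping (in particular $\mathrm{colsp}(\sxz)\subseteq\mathrm{colsp}(\sxx)$ via the zero-variance argument, $\mathbf W^+=\sxx^{1/2}$, and $\mathbf{SW}$ being the projector onto $\mathrm{colsp}(\sxx)$) is exactly the care the degenerate case requires. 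What you give up relative to the paper is derivation rather than verification: the paper's proofs show \emph{why} the solution has the whiten--project--unwhiten form and characterize the extent of non-uniqueness (free columns on the zero-variance directions of $\rv X$ and free rows on zero-weight directions of $\mathbf M$), whereas your argument presupposes the formula; your closing uniqueness remark correctly flags this and matches the paper's discussion, so nothing essential is missing.
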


\begin{proof}
    See Appendices \ref{oct-10-proof} and \ref{covector-proof} for two independent proofs of Theorem~\ref{l2-optimality}.
\end{proof}

The above theorem assumes that the random vectors $\rv X$ and $\rv Z$ are centered, and does not include a bias term. Below we extend our results to the uncentered case, and derive the optimal bias $\mathbf{b^*}$.

\begin{restatable}{theorem}{thmbiasterm}\label{bias-term}
    Let $\rv X$ and $\rv Z$ be random vectors taking values in $\mathbb R^d$ and $\mathbb R^k$ respectively, each of finite second moment. Define $\mathbf{M}$ and $\mathbf{P^*}$ as in \autoref{l2-optimality} and $\mathbf{b^*} = \E[\rv X] - \mathbf{P^*}\E[\rv X]$. Then $(\mathbf{P^*}, \mathbf{b^*})$ minimizes $\E \big\|\mathbf{P}\rv X + \mathbf{b} - \rv X \big\|^2$, subject to $\mathrm{Cov}(\mathbf{P}\rv X + \mathbf{b}, \rv Z) = \mathbf{0}$.
\end{restatable}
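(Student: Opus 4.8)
The plan is to reduce the uncentered problem to the centered one already solved in \autoref{l2-optimality} by a change of variables that subtracts off the means. Write $\mathbf{\bar x} = \E[\rv X]$ and $\mathbf{\bar z} = \E[\rv Z]$, and let $\rv X_0 = \rv X - \mathbf{\bar x}$ and $\rv Z_0 = \rv Z - \mathbf{\bar z}$ be the centered versions, which have the same covariance matrix $\sxx$ and cross-covariance $\sxz$ as $\rv X$ and $\rv Z$. The first step is to observe that for any affine map $r(\xx) = \mathbf{P}\xx + \mathbf{b}$, the constraint $\mathrm{Cov}(\mathbf{P}\rv X + \mathbf{b}, \rv Z) = \mathbf{0}$ depends only on $\mathbf{P}$ — by linearity of cross-covariance and the fact that adding a constant does not change covariance, it is exactly $\mathbf{P}\sxz = \mathbf{0}$, independent of $\mathbf{b}$, just as in the proof of \autoref{erasure-family}.

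Second, I would rewrite the objective in terms of the centered variable. We have $\mathbf{P}\rv X + \mathbf{b} - \rv X = (\mathbf{P} - \mathbf{I})\rv X_0 + \big((\mathbf{P} - \mathbf{I})\mathbf{\bar x} + \mathbf{b}\big)$, so introducing the reparametrization $\mathbf{c} = (\mathbf{P} - \mathbf{I})\mathbf{\bar x} + \mathbf{b}$ we get
\begin{equation*}
    \E \big\| \mathbf{P}\rv X + \mathbf{b} - \rv X \big\|_{\mathbf M}^2 = \E \big\| (\mathbf{P} - \mathbf{I})\rv X_0 \big\|_{\mathbf M}^2 + \big\| \mathbf{c} \big\|_{\mathbf M}^2,
\end{equation*}
because the cross term vanishes: $\E[\rv X_0] = \mathbf{0}$. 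For fixed $\mathbf{P}$ the two terms decouple, the constraint touches only $\mathbf{P}$, and $\|\mathbf{c}\|_{\mathbf M}^2 \ge 0$ with equality attainable by choosing $\mathbf{b}$ so that $\mathbf{c} = \mathbf{0}$, i.e. $\mathbf{b} = \mathbf{\bar x} - \mathbf{P}\mathbf{\bar x} = (\mathbf{I} - \mathbf{P})\E[\rv X]$. Hence any minimizer must set the bias this way, and the residual problem in $\mathbf{P}$ is precisely $\min_{\mathbf P} \E\|(\mathbf{P} - \mathbf{I})\rv X_0\|_{\mathbf M}^2 = \E\|\mathbf{P}\rv X_0 - \rv X_0\|_{\mathbf M}^2$ subject to $\mathbf{P}\sxz = \mathbf 0$, which is exactly the centered optimization of \autoref{l2-optimality} applied to $(\rv X_0, \rv Z_0)$. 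Its solution is $\mathbf{P^*}$ as defined there, and then $\mathbf{b^*} = \E[\rv X] - \mathbf{P^*}\E[\rv X]$, as claimed.

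There is no real obstacle here; the only thing to be slightly careful about is the semi-inner-product case where $\mathbf M$ is singular. In that case $\|\mathbf{c}\|_{\mathbf M}^2 = 0$ does not force $\mathbf{c} = \mathbf{0}$, so the optimal bias is no longer unique — any $\mathbf{b}$ with $(\mathbf{P^*} - \mathbf{I})\mathbf{\bar x} + \mathbf{b} \in \ker \mathbf M$ works — but the stated $\mathbf{b^*}$ is still \emph{a} minimizer, which is all the theorem asserts. I would note this parenthetically and otherwise present the argument as the clean two-line reduction above, citing \autoref{l2-optimality} for the optimality of $\mathbf{P^*}$ in the centered subproblem.
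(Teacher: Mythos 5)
Your proposal is correct and follows essentially the same route as the paper's proof: both de-mean $\rv X$, absorb the bias into a constant $\mathbf{c} = (\mathbf{P}-\mathbf{I})\E[\rv X] + \mathbf{b}$, kill the cross term using $\E[\rv X - \E[\rv X]] = \mathbf{0}$, set $\mathbf{c}=\mathbf{0}$ to fix the bias, and invoke \autoref{l2-optimality} for the remaining centered problem. Your parenthetical remark on non-uniqueness of the bias when $\mathbf{M}$ is singular is a small, accurate refinement the paper omits, but it does not change the argument.
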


\begin{proof}\label{app:bias-term}
Let $\mathbf{P} \in \mathbb R^{d \times d}$ and define $\Tilde{\rv X} = \rv X - \E[\rv X]$ and $\mathbf{c} = \mathbf{P}\E[\rv X] + \mathbf{b} - \E[\rv X]$. Then,
    \begin{align*}
        \E \big\|\mathbf{P}\rv X + \mathbf{b} - \rv X \big\|^2_\mathbf{M}
        &= \E \big\|(\mathbf{P}\Tilde{\rv X} - \Tilde{\rv X}) + \mathbf{c} \big\|^2_\mathbf{M} \\
        & = \E \big\|\mathbf{P}\Tilde{\rv X} - \Tilde{\rv X} \big\|^2_\mathbf{M}
          + 2\E \big[ \mathbf{P}\Tilde{\rv X} - \Tilde{\rv X} \big]^T \mathbf{M} \mathbf{c}
          + \mathbf{c}^T \mathbf{M} \mathbf{c} \\
        & = \E \big\|\mathbf{P}\Tilde{\rv X} - \Tilde{\rv X} \big\|^2_\mathbf{M}
          + \mathbf{c}^T \mathbf{M} \mathbf{c},
    \end{align*}
    where we have eliminated the middle term because $\mathbf{P}$ is linear and $\E[\Tilde{\rv X}] = 0$.
    Since $\mathbf{M}$ is p.s.d., our objective is minimized for $\mathbf{c} = \mathbf{0}$, i.e. $\mathbf{b} = \E[\rv X] - \mathbf{P}\E[\rv X]$. The problem thus reduces to choosing $\mathbf{P}$ so as to minimize $\E \big\|\mathbf{P}\Tilde{\rv X} - \Tilde{\rv X} \big\|^2_\mathbf{M}$ subject to $\mathrm{Cov}(\mathbf{P}\rv X + \mathbf{b}, \rv Z) = \mathrm{Cov}(\mathbf{P}\Tilde{\rv X}, \rv Z) = \mathbf{0}$, which \autoref{l2-optimality} shows occurs when $\mathbf{P} = \mathbf{P^*}$.
\end{proof}

Putting together Theorems \ref{l2-optimality} and \ref{bias-term} and rearranging, we arrive at the LEACE formula:
\begin{align*}
    r_{\mathrm{LEACE}}(\xx) &= \xx - \mathbf{W}^+ \pwxz\mathbf{W}\big (\xx - \E[\rv X] \big ) \tag{1} \label{eq:1}
\end{align*}
Intuitively, LEACE de-means and whitens $\xx$, projects onto the subspace responsible for correlations between $\rv X$ and $\rv Z$, then unwhitens the result. Finally, it subtracts this value from $\xx$, thereby surgically removing the linearly available information about $\rv Z$.

\begin{figure}
    \centering
    \includegraphics[width=\textwidth]{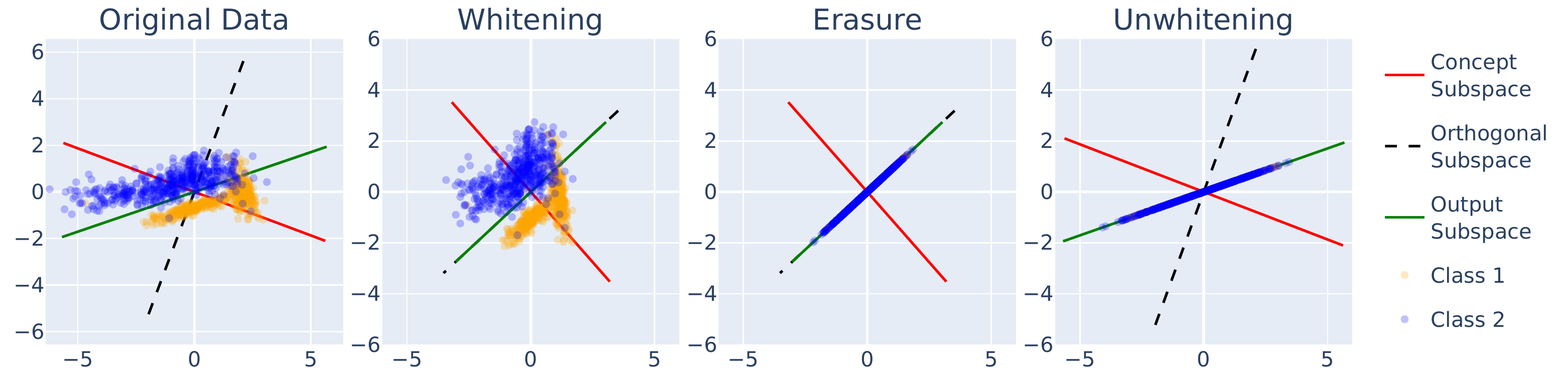}
    \caption{LEACE projection in 3 steps. First the data is whitened, ensuring equal variance in all directions. It is then orthogonally projected onto $\mathrm{colsp}(\mathbf W \sxz)^\perp$, guaranteeing linear guardedness. Finally, we unwhiten the data so that its covariance structure mimics the original.}
    \label{fig:leace-proj}
\end{figure}

\subsection{Oblique Projections are Least-Squares Optimal}
\label{sec:oblique}
Prior work on linear concept erasure has assumed that erasure functions should be orthogonal projections \cite{ravfogel-etal-2020-null, ravfogel2022adversarial, shao-etal-2023-gold}, appealing to the well-known fact that an orthogonal projection of a point $\xx$ onto a subspace $U$ yields the nearest point in $U$ to $\xx$. But even in the case where $\rv X$ is centered, $r_{\mathrm{LEACE}}$ is \emph{not} an orthogonal projection in general. Orthogonal projection matrices are symmetric, and $\mathbf I - \mathbf{W}^+ \pwxz\mathbf{W}$ is only symmetric in the special case where $\pwxz$ and $\mathbf{W}$ commute. It is an \emph{oblique} projection however, since applying $\mathbf P^*$ twice yields the same result as applying it once: $(\mathbf P^*)^2 = \mathbf I - 2\mathbf W \pwxz \mathbf W^+ + \mathbf{W}^+ \pwxz\cancel{\mathbf{W}\mathbf{W}^+} \pwxz\mathbf{W} = \mathbf P^*$.

Orthogonal projections are generally not least-squares optimal for concept erasure because the necessary and sufficient condition for linear guardedness, $\mathbf P \sxz = \mathbf{0}$, is a constraint on the \emph{nullspace} of $\mathbf{P}$, and not on its range. We may freely choose the range of the projection to minimize the mean squared distance, as long as we zero out $\mathrm{colsp}(\sxz)$. In Figure~\ref{fig:leace-proj}, an orthogonal projection would map all points onto the the dashed line, thereby preserving less of the variance of the original data than LEACE does (green line). See Appendix~\ref{app:oblique-optimality} for a concrete example.

\subsection{Extension to Continuous $\rv Z$}

While not a focus of this work, it's worth noting that LEACE can also be applied to the setting where $\rv Z$ takes arbitrary values in $\mathbb R^k$, as long as we restrict ourselves to the ordinary least squares regression loss $\loss(\eta, \mathbf{z}) = \|\eta - \mathbf{z}\|^2_2$. In particular, the proofs of equivalence between conditions \ref{cond:lin-guarded} and \ref{cond:optimal-is-trivial} given in Appendix~\ref{app:guarded-to-trivial} make no categorical assumption on $\rv Z$, and the equivalence between the optimality of a zero weight matrix (condition \ref{cond:optimal-is-trivial}) and zero cross-covariance (condition \ref{cond:zero-cross-covar}) is well known in the OLS setting. We can then apply Theorems \ref{l2-optimality} and \ref{bias-term}, which also make no categorical assumption, to derive the same optimal affine eraser as in the categorical case.

\section{Evaluation}\label{sec:evaluation}

\subsection{Intrinsic Evaluation}

Following \citet{ravfogel2022rlace} we evaluate the ability of our method to remove gender information from the last hidden layer of a frozen BERT model. We use the biographies dataset of \citet{DBLP:journals/corr/abs-1901-09451}, composed of short biographies annotated by both binary gender and profession. 
We embed each biography with the \CLS\ embedding in the last layer of BERT, enforce the same-conditional-mean constraint to remove gender information from the \CLS\,, and then evaluate the performance of the model, after the intervention, on the main task of profession prediction. We compare our intervention with RLACE \cite{ravfogel2022rlace}, which uses gradient-based optimization to solve a linear concept-erasure adversarial game.

\paragraph{Concept erasure results.} First, we evaluate the ability of logistic regression classifiers to recover the removed information. The results, presented in \cref{fig:bios-bert}, show that our method is the only to achieve random accuracy (perfect erasure) with a small edit, although RLACE (but not INLP) comes close.  At the same time, our method is around 2 orders of magnitude faster, and does not require gradient-based optimization. 

\begin{wrapfigure}{r}{0.35\textwidth}
    \centering
    \includegraphics[scale=0.35, clip]{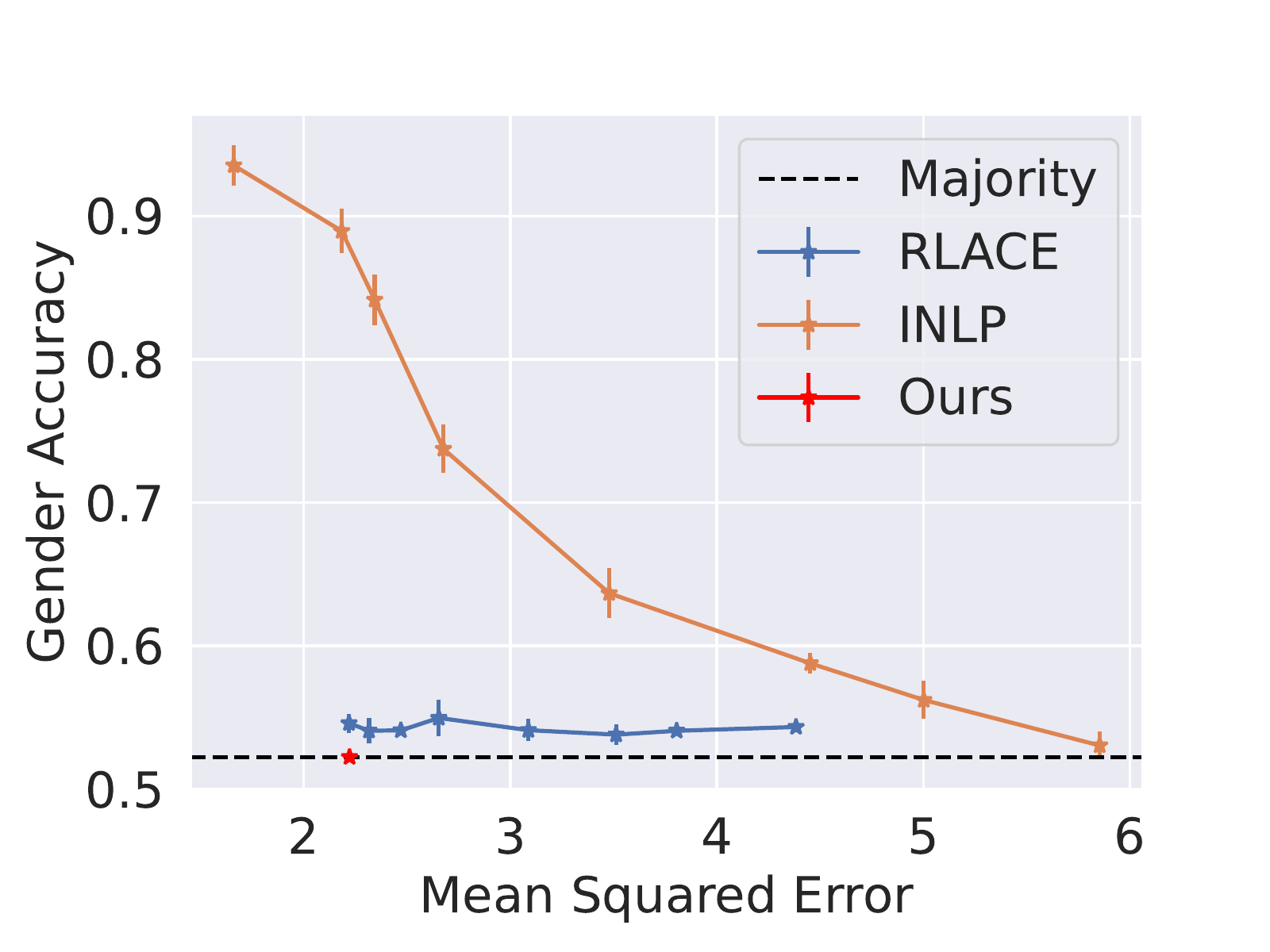}
    \caption{Gender prediction accuracy after bias-removal projection versus the mean squared distance from the original embedding for INLP, RLACE, and LEACE on BERT embeddings.}
    \label{fig:bios-bert}
\end{wrapfigure}


\subsection{Downstream Fairness}
\label{sec:fairness}

\begin{figure}[t!]
\centering
\subfloat{
\includegraphics[width=0.5\columnwidth]{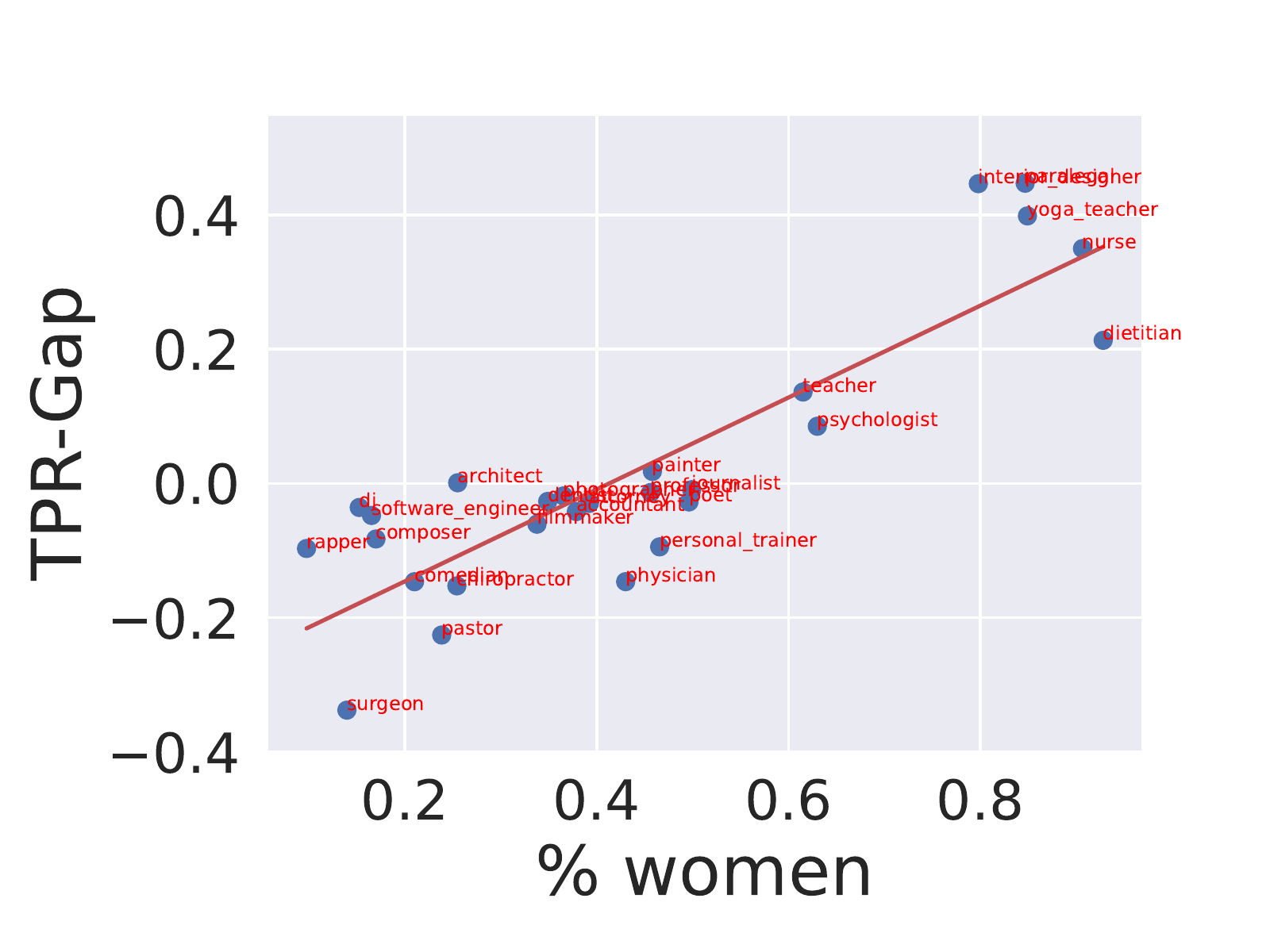}
}
\subfloat{
\includegraphics[width=0.5\columnwidth]{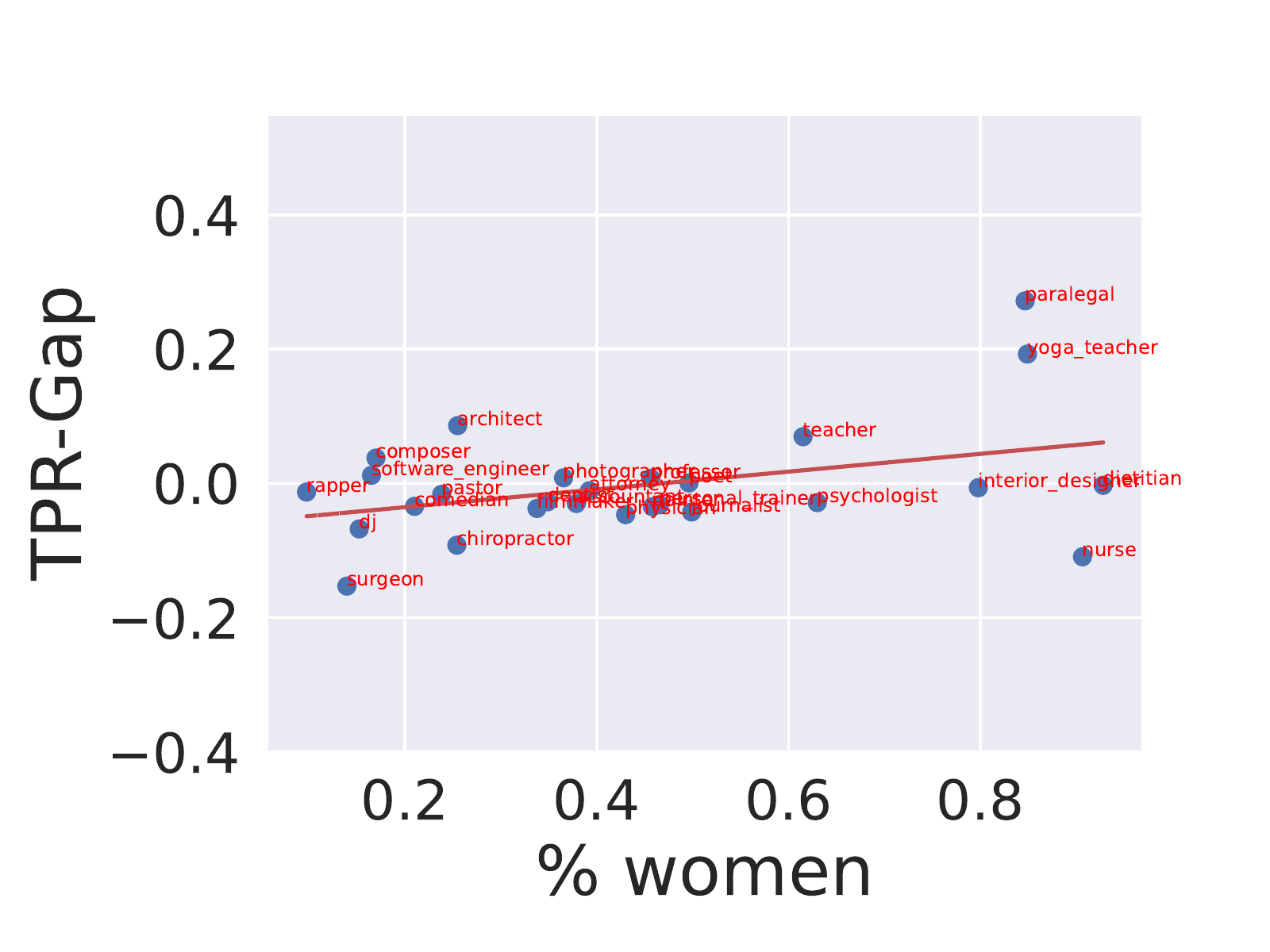}
}
\caption{The correlation between $GAP_{female,y}^{TPR}$ and the relative proportion of women in profession $y$, for BERT embedding, before (left; R=0.867) and after (right; R=0.392) the projection.}
\end{figure}

How does our intervention affect the behavior of the model on the main classification task of profession prediction? We fit a logistic regression profession-prediction classifier over the projected \CLS\ embeddings.  

To measure the bias in a classifier, we follow \citet{DBLP:journals/corr/abs-1901-09451} and use the TPR-GAP measure, which quantifies the bias in a classifier by considering the difference (GAP) in the true positive rate (TPR) between individuals with different protected attributes (e.g. race or gender). 
We use the notation $\mathrm{GAP}_{z,y}^{\mathrm{TPR}}$ to denote the TPR-gap in some main-class label $y$ (e.g. ``nurse'' prediction) for some protected group $z$ (e.g. ``female''), we also consider $\mathrm{GAP}_{z}^{\mathrm{TPR}, \mathrm{RMS}}$, the RMS of the TPR-gap across all professions for a protected group $z$:
\begin{align*}
    \mathrm{GAP}_{z}^{\mathrm{TPR}, \mathrm{RMS}} = \sqrt{\frac{1}{|C|} \sum_{y \in C} (\mathrm{GAP}_{z,y}^{\mathrm{TPR}})^2}
\end{align*}
To calculate the relation between the bias the model exhibits and the bias in the data, we also calculate $\sigma_{(\mathrm{GAP}^{\mathrm{TPR}}, \%\text{Women})}$, the correlation between the TPR gap in a given profession and the percentage of women in that profession. 

\paragraph{Results.} The main-task classifier achieves profession-prediction accuracy of 77.3\% on the projected embeddings (compared with 79.3\% over the original embeddings), indicating that the intervention minimally affects the ability to predict the profession of a person from the embedding of their biography. At the same time, the TPR gap drops significantly from 0.198 to 0.084, indicating a sharp drop in the biased behavior of the profession classifier. Indeed, inspecting the correlation $\sigma_{(\mathrm{GAP}^{\mathrm{TPR}}, \%\text{Women})}$ between the gap (per profession) and the embedding of women in this profession, we see that this correlation plummets from 0.867 to 0.392 after erasure. Re-fitting the main-task logistic regression classifier over the projected embeddings yields a slightly higher main-task accuracy of 78.1\%, at the price of significantly increasing the TPR gap to 0.158.\footnote{The softmax probabilities of a multiclass logistic regression classifier can leak the removed information if \textit{another} classifier is stacked on top of it \cite{ravfogel2022linear}, though this setup is not linear.}

\subsection{Revisiting Amnesic Probing}
\label{sec:amnesic}

\begin{figure}[t!]
\centering
\subfloat{
\includegraphics[width=0.5\columnwidth]{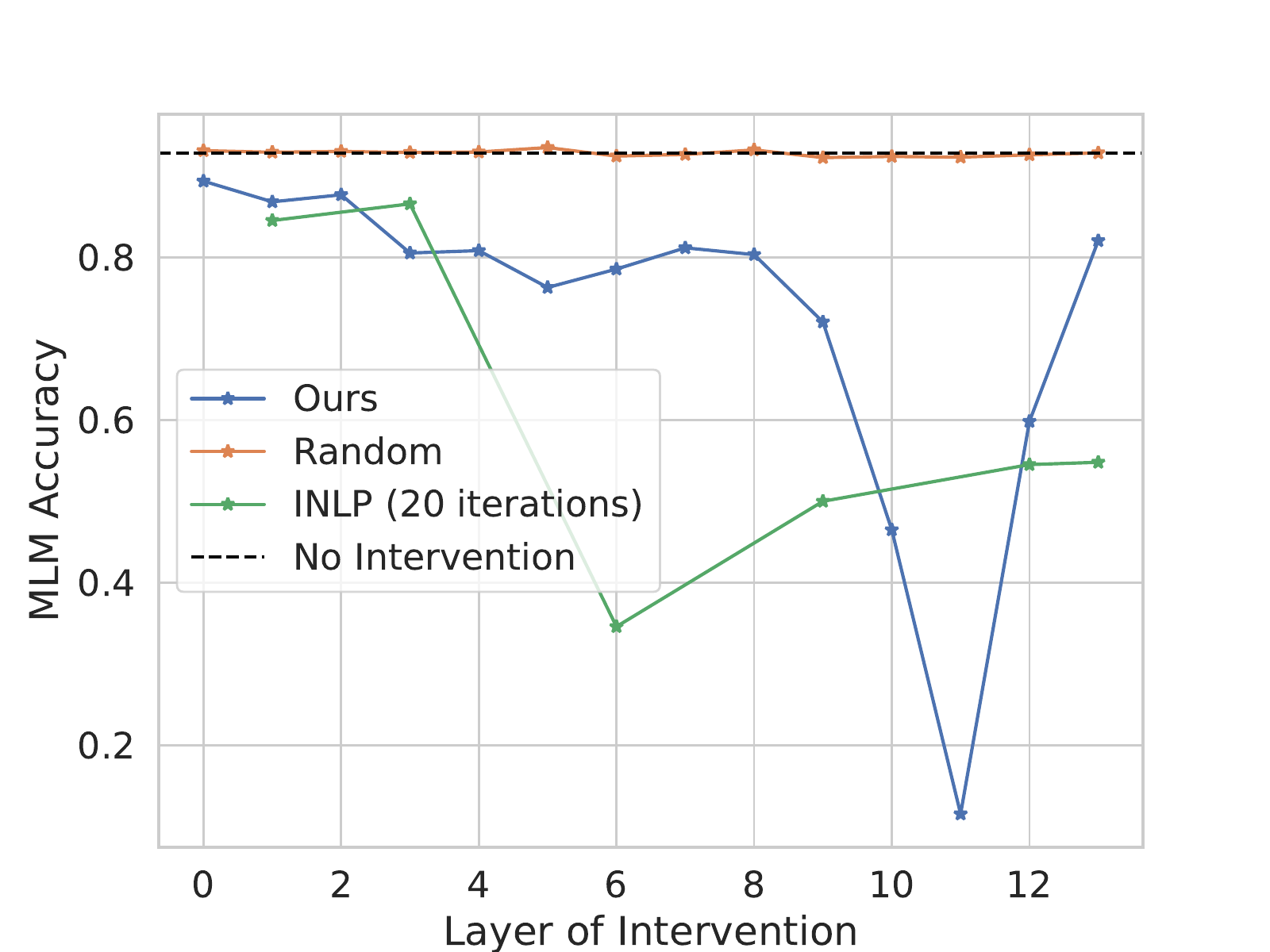}
}
\subfloat{
\includegraphics[width=0.5\columnwidth]{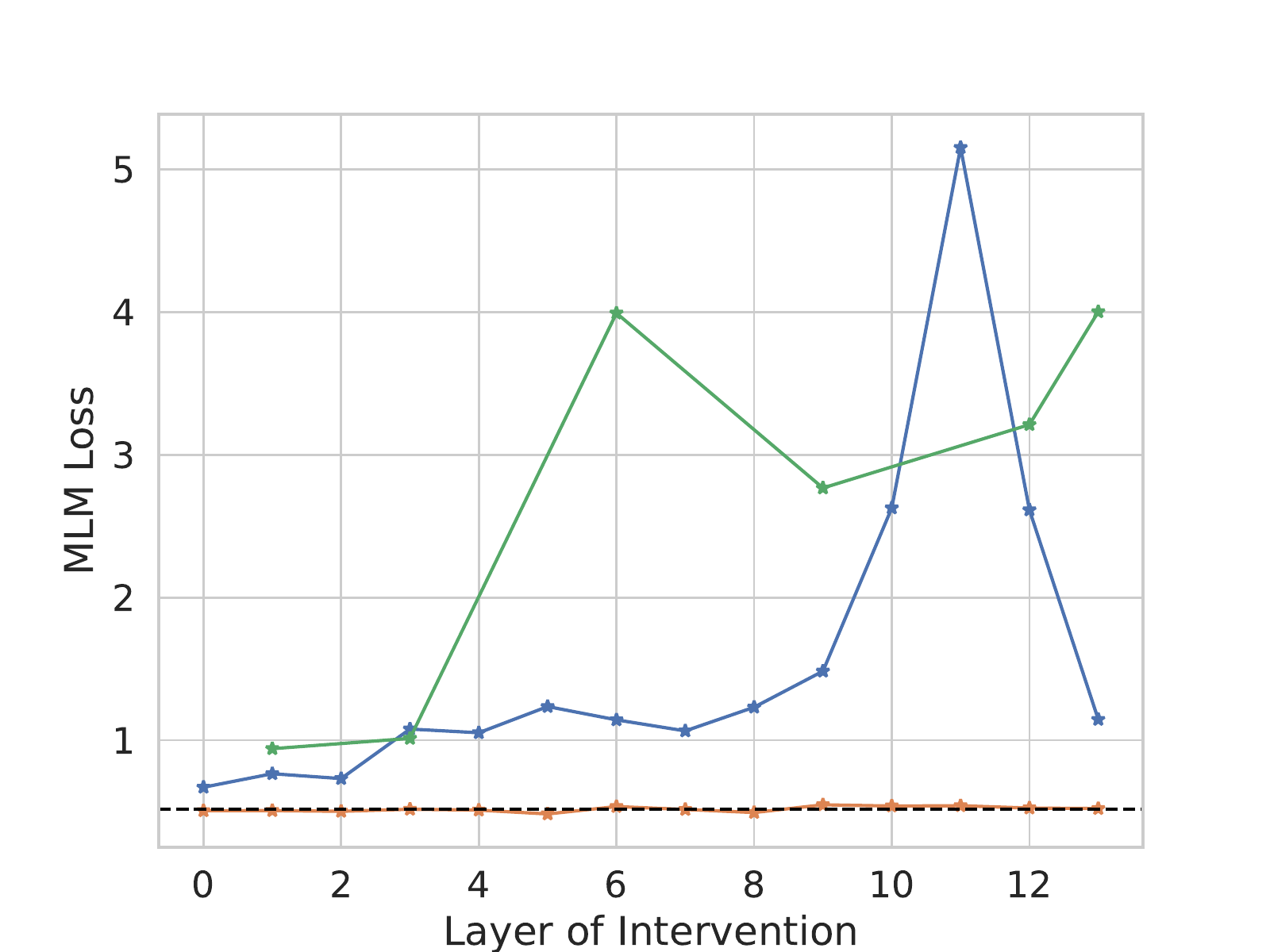}
\label{fig:amnesic-bert}
}
\caption{Amnesic probing results on \texttt{bert-base-uncased}.}
\end{figure}

\citet{elazar2021amnesic} have introduced the idea of \emph{amnesic probing} as a causal intervention that aims to test the importance of a given concept (e.g. part-of-speech tag) to some main task (e.g. language modeling). They applied Iterative Nullspace Projection (INLP) to remove different concepts from the activations of the model, and assessed the degree to which its behavior changed when performing masked language modeling. Since INLP often requires dozens of iterations to completely erase the concept, its usage in this context raises concerns of collateral damage due to magnitude of the intervention and the non-exhaustive nature of INLP removal. Here, we replicate their experiments on the \texttt{bert-base-uncased} model with our interventions.

\paragraph{Experimental setup.} We use part-of-speech (POS) tags as our concept of interest. We collect sentences and their coarse POS tags (``Noun'', ``Verb'' etc.; 18 in total) from the English Universal Dependencies dataset \citep{nivre2020universal}. We tokenize the sentences with the \texttt{BERT} tokenizer and map each word-piece to the POS tag of the word to which it belongs. We collect the unmasked \texttt{BERT} embeddings for each layer, intervene to linearly erase the POS concept from that layer, and continue the forward pass until the last layer, from which we compute the distribution of the MLM over the vocabulary. Note that in each experiment we intervene on a single layer. We quantify the decrease in accuracy following the intervention, as well as the increase in the loss. We compare with a baseline intervention of a random orthogonal projection whose null space has the same rank as the label space (18). For INLP, we perform 20 iterations. This is needed because INLP does not effectively remove the concept; even after 20 iterations, classification accuracy is above majority accuracy. As a result, INLP reduces the rank of the embedding by 360. By contrast, our method decreases the rank just by 17.

\paragraph{Results.} The results are shown in \cref{fig:amnesic-bert}. Our intervention only mildly changes BERT LM accuracy and loss until layer 8, with the highest drop recorded in layer 11. INLP, in contrast, shows maximum effect at layer 6. Since it removes hundreds of dimensions, it is difficult to attribute this effect to the erasure of the concept. These results suggest that the \emph{causal} effect of the POS concept on the language model is concentrated in layer 11. Interestingly, this stands in contrast with POS linear probing results, which are optimal at earlier layers \citep{tenney2019bert}. As \citet{elazar2021amnesic} have noted, probing does not generally correlate with intervention-based analysis techniques.

\newpage
\section{Concept Scrubbing}\label{sec:concept-scrubbing}

\begin{wrapfigure}[13]{r}{0.5\textwidth}
\vspace{-40pt}
\begin{minipage}{0.5\textwidth}
\begin{algorithm}[H]
    \caption{Concept scrubbing}
    \begin{algorithmic}[1]
    \Require Model with $\ell$ layers $f = f_\ell \circ \ldots \circ f_1 $
    \Require Design matrix $\mathbf{X} \in \mathbb R^{n \times d}$
    \Require Label matrix $\mathbf{Z} \in \mathbb R^{n \times k}$
    \Ensure LEACE parameters for each layer in $f$
    \State $\mathbf{H}_1 \gets \mathrm{Embed}(\mathbf{X})$
    \State $L \gets $\texttt{list()}
    \For{$l \in 1\ldots\ell$}
        \State Fit $(\mathbf{P}, \mathbf{b})$ on $\mathbf{H}_{l}$ and $\mathbf{Z}$
        \State Append $(\mathbf{P}, \mathbf{b})$ to $L$
        \State $\mathbf{H}_{l} \gets \mathbf{P}(\mathbf{H}_{l} - \mu_{\mathbf{H}_{l}}) + \mu_{\mathbf{H}_{l}}$\quad(Eq.~\ref{eq:1})
        \State $\mathbf{H}_{l + 1} \gets f_l(\mathbf{H}_{l})$
    \EndFor\\
    \Return $L$
    \end{algorithmic}\label{alg:concept-scrubbing}
\end{algorithm}
\end{minipage}
\end{wrapfigure}

Unfortunately, \citet{elazar2021amnesic} were forced to limit their interventions to a single layer due to the limitations of INLP. INLP often requires the deletion of several dozen dimensions before linear guarding is achieved---as demonstrated in Figure~\ref{fig:bios-bert}. \citet{kumar2022probing} show empirically and theoretically that INLP causes needless ``collateral damage'' to useful parts of the embedding that are orthogonal to the concept being erased. Because of this collateral damage, it's impossible to apply INLP to multiple layers of a transformer without causing its outputs to collapse into gibberish.

Instead, we would like to erase all linear information about a concept in the activations at \emph{every} layer, which we term \textbf{concept scrubbing}. LEACE makes concept scrubbing possible and eminently practical. It causes minimal collateral damage, induces little computational overhead, and the covariance statistics it relies on can be computed in a \emph{streaming} fashion, without ever storing all the hidden states in memory or on disk.

\textbf{Algorithm.} Any intervention on the model at layer $\ell$ changes the distribution of hidden states at layers $\ell' > \ell$. Because of this, the naive approach of independently fitting LEACE parameters $(\mathbf{P}, \mathbf{b})$ for all layers of the clean model, then applying them all at once, may fail to fully erase the target concept. Instead, we fit LEACE parameters \emph{sequentially}, starting from the first layer and proceeding to the final layer. After we compute $(\mathbf{P}, \mathbf{b})$ for a layer, we immediately use them to scrub the hidden states for that layer, then feed these scrubbed embeddings to the next layer (Algorithm~\ref{alg:concept-scrubbing}).

\begin{table*}[ht]
\centering
\begin{tabular}{lcccc cccc}
\toprule
& \multicolumn{3}{c}{LLaMA} & \multicolumn{4}{c}{Pythia}\\ \cmidrule(lr){2-4} \cmidrule(lr){5-8}
Condition       & 7B             & 13B        & 30B           & 160M          & 1.4B            & 6.9B          & 12B            \\
\midrule
No intervention & 0.69           & 0.66       & 0.62          & 0.90          & 0.70            & 0.64          & 0.62           \\
Random erasure  & 0.69           & 0.66       & 0.62          & 0.99          & 0.72            & 0.66          & 0.63           \\
\midrule
LEACE & 1.73           & 1.84       & 1.96          & 2.79          & 2.25            & 3.57          & 3.20              \\
SAL   & 3.24           & 3.26       & 3.16          & 3.53          & 3.44            & 4.17          & 4.69             \\
\midrule
unigram entropy & 2.90           & 2.90       & 2.90          & 2.66          & 2.66            & 2.66          & 2.66           \\
\bottomrule
\end{tabular}
\caption{Perplexity in autoregressive language models when removing linearly available part-of-speech information from the input to each transformer layer. Units are bits per UTF-8 byte. The unigram baseline assigns probabilities to tokens based only on their frequency and not on the context.}
\label{table:scrubbing-perplexity}
\end{table*}

\subsection{Experimental Details}

\textbf{Dataset.} For each model family, we use a sample from the respective pretraining distribution: the validation split of the Pile \cite{gao2020pile} for the Pythia models \cite{biderman2023pythia}, and the RedPajama replication of the LLaMA pretraining corpus for the LLaMA family \cite{touvron2023llama}. sampling a slice of $2^{22}$ tokens for fitting the LEACE parameters and another slice of $2^{22}$ tokens for evaluation. Since neither corpus comes with part-of-speech tags, we use the  model from the SpaCy library \cite{honnibal2020spacy} to automatically generate Universal Dependency tags \cite{mcdonald2013universal}.

\textbf{Baseline method.} We also run concept scrubbing using full-rank SAL \cite{shao-etal-2023-gold}, which is similar to our method but lacks a bias term and does not adjust for correlations between features (Appendix~\ref{app:prior-work-optimality}).

\textbf{Architecture.} We focus on autoregressive language models. We evaluate our method on EleutherAI's Pythia 160M, 1.4B, 6.9B, and 12B models \cite{biderman2023pythia}, and Meta's LLaMA 7B, 13B, and 30B \cite{touvron2023llama}. We apply concept erasure to the input of each transformer block, immediately after normalization is applied (LayerNorm or RMSNorm). 

\textbf{Randomized erasure.} Almost any intervention on a neural network will cause its performance to degrade to some extent. Following \citet{elazar2021amnesic}, we isolate the effect of the concept erasure by comparing it to a control condition in which we orthogonally project onto a \emph{random} linear subspace of the same rank as the cross-covariance matrix. To reduce the variance of our results, we sample a fresh subspace for each minibatch, and erase that subspace at each layer, reporting the cross-entropy loss averaged over subspaces.

\textbf{Training efficiency.} Algorithm~\ref{alg:concept-scrubbing} avoids redundant computation by caching the layer $i$ hidden states for \emph{every} data point, then using them to run layer $i + 1$. This approach has the downside of requiring a large amount of memory or disk space during training (up to 500GB in our experiments). It's possible to avoid caching any hidden states and instead recompute them as needed, at the expense of increasing the total compute cost from $O(\ell)$ to $O(\ell^2)$.

\subsection{Results}

We find strong evidence that autoregressive language models heavily rely on linearly encoded part-of-speech information. While erasing a randomly selected subspace has little to no effect on language modeling performance, scrubbing away part-of-speech information induces a large increase in perplexity across all models (Table~\ref{table:scrubbing-perplexity}).

The specific numbers, however, depend on the erasure method used: SAL induces significantly larger increases in perplexity for all models we tested. We take this to mean that SAL inflicts more collateral damage on other useful features in the embedding than LEACE does. In other words, interventions made with LEACE are more \emph{surgical} than those made with prior work; they more closely approximate the ideal of a perfect intervention which only erases the target concept and keeps everything else fixed \cite{woodward2005making, grimsley2020attention}. If this experiment were conducted with SAL alone, we would have \emph{overestimated} the causal effect of part-of-speech. 



\section{Limitations and Future Work}
Much work remains to be done to validate concept scrubbing. Specifically, we'd like to see experiments that target concepts much narrower than part-of-speech, and use behavioral metrics to determine whether scrubbing changes the network in the ways we'd intuitively expect. If these experiments succeed, an exciting next step would be the incorporation of concept scrubbing into the pretraining and/or finetuning process. This may make it possible to train deep neural networks subject to \emph{conceptual constraints}. It remains to be seen if gradient-based optimizers will be able to ``circumvent'' such constraints by encoding protected attributes in completely nonlinear ways.

In this work, we focused exclusively on \emph{linear} concept erasure due to its simplicity and tractability. Some authors have proposed nonlinear concept erasure techniques based on kernel methods, but have found that erasure functions fit using one kernel do not generalize well to other kernels \cite{ravfogel2022adversarial, shao-etal-2023-gold}. We conjecture that it is intractable to nondestructively edit $\rv X$ so as to prevent a general nonlinear adversary from recovering $\rv Z$, unless the data generating process for $\rv X$ is known in detail.\footnote{We suspect erasing a concept is at least as hard as extracting it from the original embedding. But in the worst case, information about $\rv Z$ could be encoded \emph{cryptographically} in $\rv X$, which would be intractable to decode given standard computational complexity assumptions. If the data is generated by a known algorithm, however, it may be possible to efficiently eliminate mutual information between $\rv Z$ and $\rv X$ by simply breaking the links in the causal graph that connect them.}

A major motivation of concept erasure is that it promises to prevent models from using a concept in a \emph{post hoc}, model-agnostic fashion. But if our concept scrubbing procedure turns out to yield unsatisfactory results in practical use cases, the most promising research direction might then be to improve model-\emph{specific} techniques, such as those that modify the training procedure \cite{edwards2015censoring, elazar-goldberg-2018-adversarial, geiger2022inducing}.

\section{Acknowledgements}

We are grateful to CoreWeave for providing the compute resources used in Section~\ref{sec:concept-scrubbing}. Shauli Ravfogel is grateful to be supported by the Bloomberg Data Science PhD Fellowship.

\bibliography{citations}
\bibliographystyle{plainnat}

\appendix
\clearpage
\section{Additional Related Work}
The problem of linear concept erasure is an instance of the general problem of information removal. Information removal methods generally divide into adversarial methods, which are applied during training, and the post-hoc linear methods considered in this paper. Adversarial methods \citep{edwards2015censoring,xie2017controllable, chen2018adversarial,elazar-goldberg-2018-adversarial, zhang2018mitigating} use a gradient-reversal layer during training to induce embeddings that do not encode the protected attribute. However, \citet{elazar-goldberg-2018-adversarial} have shown that these methods fail in exhaustively removing all the information associated with the protected attribute: it is often possible to train new adversaries that successfully recover the removed information. Linear methods have been proposed as a tractable alternative, where one identifies a linear subspace that captures the concept of interest, and neutralizes it using algebraic techniques. Different methods have been proposed for the identification of the subspace, e.g. PCA and variants thereof \citep{bolukbasi2016man, kleindessner2023efficient}, orthogonal-rotation \citep{dev2020oscar}, classification-based \citep{ravfogel-etal-2020-null}, spectral \citep{shao-etal-2023-gold, shao2023erasure} and adversarial approaches \citep{ravfogel2022rlace}. 

Few works theoretically characterize the condition of linear guardedness. \citet{haghighatkhah2021obstructing} extensively analyzed the problem of preventing linear classification, with the focus on decreasing accuracy. They provide a constructive proof of an optimal intervention for an SVM classifier. \citet{ravfogel2022linear} have proposed a formal definition of linear guardedness based on $\mathcal{V}$ information, and characterized the fairness implications of guardedness; we show the relations with our definition above. \citet{ravfogel2022rlace} provide an adversarial formulation of the problem, derive a closed-formed solution to certain cases, and propose an SGD-based optimization for others. While they seek an \emph{orthogonal} projection, we empirically showed that their solution is very close to ours. \citet{sadeghi2019global} and \citet{sadeghi2021fundamental} both study an adversarial formulation of concept erasure for linear regression, and they trade-off with main-task performance. In contrast to \citet{ravfogel2022rlace}, they consider a general linear adversary, i.e. not necessarily a projection matrix.  Closest to our work are \citet{kleindessner2023efficient, haghighatkhah2022better, shao-etal-2023-gold}. As we showed above (\cref{sec:leace}), those methods do achieve the goal of linear guardedness though they are unable to prove this fact. At the same time, they are not optimal in terms of damage to the original embedding space.

\section{Equivalence of Guardedness with the Optimality of Constant Predictors}\label{sec:eqv-guardedness-with-optimality-of-constant-predictors}

The following two theorems establish the equivalence of conditions \ref{cond:lin-guarded} and \ref{cond:optimal-is-trivial} (indeed, they do so in the general setting, with no assumption of convex loss or linear predictors).

\begin{theorem}\label{thm:guarded-to-trivial}
    Suppose $\rv X$ $(\mathcal V, \mathfrak L)$-guards $\rv Z$. Then for every loss $\loss \in \mathfrak L$, the corresponding trivially attainable loss $\Ltrivfull$ cannot be improved upon by any predictor $\eta(\cdot; \vtheta) \in \mathcal V$, i.e. $\Ltriv = \inf_\vtheta \E[\loss(\eta(\rv X; \vtheta), \rv Z)]$.
\end{theorem}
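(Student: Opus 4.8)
The plan is to sandwich the quantity $\inf_\vtheta \E\big[\loss(\eta(\rv X; \vtheta), \rv Z)\big]$ between the trivially attainable loss $\Ltriv$ from above and from below. The upper bound is immediate and does not even use guardedness: because $\mathcal V$ is assumed to contain all constant functions, the infimum over $\vtheta \in \Theta$ ranges over (among others) every predictor of the form $\eta(\mathbf x) \equiv \mathbf b$, so $\inf_\vtheta \E[\loss(\eta(\rv X; \vtheta), \rv Z)] \le \inf_{\mathbf b \in \mathbb R^k} \E[\loss(\mathbf b, \rv Z)] = \Ltriv$, for every $\loss \in \mathfrak L$.

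For the reverse inequality I would test guardedness against a deliberately chosen competitor $\rv X' \in \chi$, namely a \emph{deterministic} (constant) random vector $\rv X' \equiv \mathbf c$ for some fixed $\mathbf c \in \mathbb R^d$. Such an $\rv X'$ clearly lies in $\chi$: it has finite first moment, takes values in $\mathbb R^d$, and is trivially jointly defined with $\rv Z$. The key observation is that for this $\rv X'$ the prediction $\eta(\rv X'; \vtheta) = \eta(\mathbf c; \vtheta)$ is itself a constant vector in $\mathbb R^k$, and as $\vtheta$ ranges over $\Theta$ this constant ranges over all of $\mathbb R^k$ (again because $\mathcal V$ contains every constant function). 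Hence $\inf_\vtheta \E[\loss(\eta(\rv X'; \vtheta), \rv Z)] = \inf_{\mathbf b \in \mathbb R^k} \E[\loss(\mathbf b, \rv Z)] = \Ltriv$. Since $\rv X$ $(\mathcal V, \mathfrak L)$-guards $\rv Z$, it maximizes $\rv X'' \mapsto \inf_\vtheta \E[\loss(\eta(\rv X''; \vtheta), \rv Z)]$ over $\chi$, so in particular its value is at least that achieved by $\rv X'$, giving $\inf_\vtheta \E[\loss(\eta(\rv X; \vtheta), \rv Z)] \ge \Ltriv$.

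Combining the two bounds yields $\inf_\vtheta \E[\loss(\eta(\rv X; \vtheta), \rv Z)] = \Ltriv$, which is exactly the claim. There is no real obstacle here — the argument is a short squeeze — but the one point that deserves care is verifying that the competitor $\rv X'$ genuinely belongs to $\chi$ and that, for it, the best attainable expected loss collapses exactly to $\Ltriv$; this is precisely where the standing assumption that $\mathcal V$ contains all constant functions is essential. One could alternatively take $\rv X'$ to be any random vector statistically independent of $\rv Z$, but the constant choice makes the collapse to $\Ltriv$ most transparent.
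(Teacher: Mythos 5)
Your proof is correct and follows essentially the same route as the paper: the paper also sandwiches $\inf_\vtheta \E[\loss(\eta(\rv X;\vtheta),\rv Z)]$ between $\Ltriv$ (using the constant predictors in $\mathcal V$) and the value attained by a constant competitor $\rv X'$ (the paper takes $\rv X' \equiv \mathbf 0$, you take $\rv X' \equiv \mathbf c$), invoking guardedness for the lower bound. The only difference is cosmetic, so no further comment is needed.
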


\begin{proof}
    \label{app:guarded-to-trivial}
    Consider the null random vector $\rv X'(\omega) = \mathbf 0$. Since all predictors are constant on $\rv X'$, and the trivially attainable loss gives the \textit{best} available expected loss among constant predictors, we must have:
    \begin{equation}\label{trivial-best-constant}
        \Ltriv = \inf_\vtheta \E[\loss(\eta(\rv X'; \vtheta), \rv Z)]
    \end{equation}
    The right side of equation (\ref{trivial-best-constant}) is the best possible loss achievable by a function $\eta(\cdot; \vtheta)$ on the joint distribution of $(\rv X', \rv Z)$, which by the definition of guardedness is upper bounded by the best possible loss achievable on the joint distribution of $(\rv X, \rv Z)$:
    \begin{equation}\label{xprime-easier-than-x}
        \inf_\vtheta \E[\loss(\eta(\rv X'; \vtheta), \rv Z)]
        \le \inf_\vtheta \E[\loss(\eta(\rv X; \vtheta), \rv Z)]
    \end{equation}
    Combining equations (\ref{trivial-best-constant}) and (\ref{xprime-easier-than-x}), and the fact that all constant functions exist in our function class $\mathcal V = \{\eta(\cdot; \vtheta)\}$, we arrive at our desired result:
    \begin{equation*}
        \Ltriv = \inf_\vtheta \E[\loss(\eta(\rv X; \vtheta), \rv Z)]
    \end{equation*}
\end{proof}

\begin{theorem}\label{thm:trivial-to-guarded}
    Suppose that for every loss $\loss \in \mathfrak L$, the corresponding trivially attainable loss $\Ltrivfull$ cannot be improved upon by any predictor $\eta(\cdot; \vtheta) \in \mathcal V$, i.e. $\Ltriv = \inf_\vtheta \E[\loss(\eta(\rv X; \vtheta), \rv Z)]$. Then $\rv X$ $(\mathcal V, \mathfrak L)$-guards $\rv Z$.
\end{theorem}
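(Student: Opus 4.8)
The plan is to unwind \cref{def:guardedness} and show directly that, for every loss $\loss \in \mathfrak L$, the random vector $\rv X$ attains the maximum of the map $\rv X' \mapsto \inf_{\vtheta \in \Theta} \E[\loss(\eta(\rv X'; \vtheta), \rv Z)]$ over all $\rv X' \in \chi$. First I would fix an arbitrary $\loss \in \mathfrak L$ and an arbitrary competitor $\rv X' \in \chi$. The key observation is a universal upper bound that requires no assumption on $\rv X'$: since $\mathcal V$ contains every constant function, restricting the infimum over $\vtheta \in \Theta$ to those parameters yielding constant predictors can only increase it, so
\[
\inf_{\vtheta \in \Theta} \E[\loss(\eta(\rv X'; \vtheta), \rv Z)] \;\le\; \inf_{\mathbf b \in \mathbb R^k} \E[\loss(\mathbf b, \rv Z)] \;=\; \Ltrivfull .
\]

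Next I would invoke the hypothesis of the theorem applied to this same $\loss$: the trivially attainable loss cannot be improved upon on $(\rv X, \rv Z)$, i.e. $\inf_{\vtheta} \E[\loss(\eta(\rv X; \vtheta), \rv Z)] = \Ltrivfull$. Chaining this with the previous display gives, for every $\rv X' \in \chi$,
\[
\inf_{\vtheta \in \Theta} \E[\loss(\eta(\rv X'; \vtheta), \rv Z)] \;\le\; \Ltrivfull \;=\; \inf_{\vtheta \in \Theta} \E[\loss(\eta(\rv X; \vtheta), \rv Z)] ,
\]
so $\rv X$ maximizes the inner infimum over $\chi$. Since $\rv X$ is itself an element of $\chi$ (it has finite first moment, takes values in $\mathbb R^d$, and is jointly defined with $\rv Z$), the supremum is actually attained at $\rv X$, which is precisely the defining condition of $(\mathcal V, \mathfrak L)$-guardedness. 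As $\loss \in \mathfrak L$ was arbitrary, this holds for all losses in $\mathfrak L$ and the proof is complete.

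There is essentially no hard step here: this is the clean converse of \cref{thm:guarded-to-trivial} and the argument is a two-line chain of inequalities. The only points worth flagging are the direction of the constant-predictor bound (an infimum over the larger family $\mathcal V$ is $\le$ the infimum over the subfamily of constants, which is why $\Ltrivfull$ is an \emph{upper} bound on the achievable loss for \emph{any} input, not a lower bound), and the routine check that $\rv X \in \chi$ so that it is an admissible point of the $\operatorname{argmax}$. No measure-theoretic or analytic machinery beyond the definitions is needed.
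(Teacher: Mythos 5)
Your proof is correct and follows essentially the same route as the paper's: bound $\inf_{\vtheta} \E[\loss(\eta(\rv X'; \vtheta), \rv Z)]$ above by $\Ltriv$ for any competitor $\rv X'$ using the fact that $\mathcal V$ contains all constants, then invoke the hypothesis that $\rv X$ attains $\Ltriv$. The only cosmetic difference is that you explicitly note $\rv X \in \chi$ so the argmax is attained, which the paper leaves implicit.
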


\begin{proof}
    \label{app:trivial-to-guarded}
    Let $\rv X': \Omega \to \mathbb R^d$ be any other random data vector with finite first moment.
    
    Since all constant predictors exist in our predictor class $\mathcal V = \{\eta(\cdot; \vtheta)\}$, the best loss achievable on $(\rv X', \rv Z)$ by functions in $\mathcal V$ must be at least as good as the trivially attainable loss (the best loss available by such constant predictors):
    \begin{equation*}
        \inf_\vtheta \E[\loss(\eta(\rv X'; \vtheta), \rv Z)]
        \le \Ltriv
    \end{equation*}
    By assumption, the trivially attainable loss cannot be improved upon over $(\rv X, \rv Z)$ by predictors in $\mathcal V$:
    \begin{equation*}
        \Ltriv
        = \inf_\vtheta \E[\loss(\eta(\rv X; \vtheta), \rv Z)]
    \end{equation*}
    Since our choice of $\rv X'$ was arbitrary, this shows that $\rv X$ maximizes the minimal achievable loss, so $\rv X$ $(\mathcal V, \mathfrak L)$-guards $\rv Z$.
\end{proof}

\section{Linear Guardedness is Equivalent to Linear Statistical Parity}\label{statistical-parity}

To measure the effect of linear guardedness on main-task classifiers, we use the following minimal definition of ``fairness'' with respect to an attribute, adapted from \citet{edwards2015censoring}.
\begin{definition}[Statistical Parity]\label{def:statistical-parity}
    Let $\rv X$ and $\rv Z$ be defined as above, and let $f$ be a function with domain $\mathbb R^d$. Then $f$ exhibits \textbf{statistical parity} with respect to $\rv Z$ when evaluated on $\rv X$ if
    \begin{equation*}
        \forall z \in \mathcal Z : \E[f(\rv X)|\rv Z = z] = \E[f(\rv X)].
    \end{equation*}
\end{definition}

We now prove the equivalence of conditions \ref{cond:class-means-equal} and \ref{cond:statistical-parity}.

\begin{theorem}\label{thm:main-task}
    Let $\rv X$ and $\rv Z$ be defined as above. Then every linear predictor $f(\mathbf{x}) = \mathbf{b} + \mathbf{Wx}$ exhibits statistical parity w.r.t. $\rv Z$ when evaluated on $\rv X$ if and only if each class-conditional mean $\E\big[\rv X|\rv Z = z\big]$ is equal to $\E\big[\rv X\big]$.
\end{theorem}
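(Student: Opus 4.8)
The statement is an ``if and only if,'' and both directions follow almost immediately from linearity of expectation; the plan is to handle each direction in one short step.

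For the ``if'' direction, I would assume $\E[\rv X \mid \rv Z = z] = \E[\rv X]$ for every $z \in \mathcal Z$ and take an arbitrary linear predictor $f(\mathbf x) = \mathbf b + \mathbf{Wx}$. Then, using linearity of conditional expectation,
\begin{equation*}
    \E[f(\rv X) \mid \rv Z = z] = \mathbf b + \mathbf W\,\E[\rv X \mid \rv Z = z] = \mathbf b + \mathbf W\,\E[\rv X] = \E[f(\rv X)],
\end{equation*}
which is exactly statistical parity of $f$ with respect to $\rv Z$ (\cref{def:statistical-parity}). The finite first moment of $\rv X$ guarantees all the expectations above are well defined.

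For the ``only if'' direction, I would simply instantiate the hypothesis at the identity map, i.e. the linear predictor with $\mathbf b = \mathbf 0$ and $\mathbf W = \mathbf I$, so that $f(\mathbf x) = \mathbf x$. Statistical parity of this particular $f$ reads $\E[\rv X \mid \rv Z = z] = \E[\rv X]$ for all $z$, which is precisely condition \ref{cond:class-means-equal}. (If one prefers to avoid singling out the identity, one can instead note that parity of every coordinate projection $f(\mathbf x) = \mathbf e_i^T \mathbf x$ forces equality of each coordinate of the class-conditional means, but the identity map is the cleanest witness.)

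There is essentially no substantive obstacle here: the only thing to be mildly careful about is ensuring the expectations exist, which is covered by the standing assumption that $\rv X$ has finite first moment, and stating clearly that the identity (or coordinate projections) indeed lies in the class of affine predictors $\mathbf b + \mathbf{Wx}$. With the four-way equivalence of conditions \ref{cond:lin-guarded}--\ref{cond:zero-cross-covar} already established, this theorem closes the loop by adding condition \ref{cond:statistical-parity}.
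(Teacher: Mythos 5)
Your proof is correct and matches the paper's argument essentially verbatim: the ``if'' direction via linearity of (conditional) expectation, and the ``only if'' direction by instantiating statistical parity at the identity map $\mathrm{id}(\mathbf{x}) = \mathbf{x}$. No gaps; the remark about coordinate projections is a harmless alternative the paper does not need.
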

\begin{proof}
    Suppose each class-conditional mean $\E\big[\rv X|\rv Z = z\big]$ is equal to $\E\big[\rv X\big]$. Then by the linearity of expectation, we have for all $z \in \mathcal{Z}$:
    \begin{align*}
        \E[f(\rv X)| \rv Z = z] = \E[\mathbf{W}\rv X + \mathbf{b}| \rv Z = z] = \mathbf{W}\E[\rv X| \rv Z = z] + \mathbf{b} = \mathbf{W}\E[\rv X] + \mathbf{b} = \E[f(\rv X)].
    \end{align*}
    This matches the definition of statistical parity provided in \cref{def:statistical-parity}.

    Conversely, suppose every linear predictor $f(\mathbf{x}) = \mathbf{b} + \mathbf{Wx}$ exhibits statistical parity w.r.t. $\rv Z$ when evaluated on $\rv X$. Then this holds for the identity function $\mathrm{id}(\mathbf{x}) = \mathbf{x}$, and thus for all $z \in \mathcal{Z}$:
    \begin{align*}
        \E[\rv X | \rv Z = z] = \E[\mathrm{id}(\rv X) | \rv Z = z] = \E[\mathrm{id}(\rv X)] = \E[\rv X].
    \end{align*}
\end{proof}

\section{Implications for Prior Work}
\label{app:prior-work-optimality}

In this section we discuss the implications of Theorem~\ref{erasure-family}, which characterizes the necessary and sufficient conditions for an affine erasure function to yield a perfectly linearly guarded dataset, for methods proposed in prior work.

Spectral Attribute RemovaL (SAL) \cite{shao-etal-2023-gold} uses the top $n$ left singular vectors of $\sxz$ to construct an orthogonal projection matrix $\mathbf{Q}_{\mathrm{SAL}} = \mathbf{I} - \mathbf{U}_{:n}\mathbf{U}_{:n}^T$ which is then applied to $\rv X$. Notably, while $n$ is presented as a free parameter in their method, all of their experiments involve binary classification problems where $\rv Z$ is a one-hot vector, and $n$ is set to a value no greater than 2. We'll call the version of SAL where $n = \rank(\sxz)$, ``full-rank SAL.'' Since these left singular vectors are an orthonormal basis for $\sxz$'s column space, Theorem~\ref{erasure-family} implies that full-rank SAL guarantees linear guardedness.

Mean Projection (MP) \cite{haghighatkhah2022better} orthogonally projects $\rv X$ onto the orthogonal complement of the span of the difference in class centroids $\E[\rv X | \rv Z = 1] - \E[\rv X | \rv Z = 0]$, where $\rv Z$ is assumed to be binary. Since the centroids are equal after the projection, this method guarantees linear guardedness by Theorem~\ref{equality-sufficient}. In fact, by Theorem~\ref{zero-covariance}, MP is mathematically equivalent to SAL when $\rv Z$ is a one-dimensional random vector taking one of two possible values.

\newpage 


\section{Derivation of LEACE}
\thmltwooptimality*

Below are two independent proofs of Theorem~\ref{l2-optimality}.

\subsection{Algebraic Proof}\label{oct-10-proof}

\begin{proof}
    We shall first show that, in any orthonormal basis,\footnote{Throughout this proof, we abuse the notations $\rv X_i, \mathbf{P_i}$, etc. to refer to the $i^\text{th}$ component in the specified basis, not necessarily the standard one.} each row $\mathbf{P_i}$ constitutes an independent optimization problem, and then select a basis in which we can easily show that the corresponding component $\rv X_i$ of $\rv X$ can be almost surely decomposed into a linear combination of mutually uncorrelated components in the whitened random vector $\mathbf W \rv X$, some of which correlate with $\rv Z$ and some of which do not. The solution $(\mathbf P \rv X)_i$ is then that same linear combination, restricted to those components which do not correlate with $\rv Z$.

    Consider first an orthonormal basis diagonalizing the inner product $\mathbf{M}$, so that $\langle \mathbf{x}, \mathbf{y} \rangle_\mathbf{M} = \sum_{i=1}^d \alpha_{i}x_iy_i$ for fixed $\alpha_1, \ldots, \alpha_d \ge 0$. This allows us to treat each row $\mathbf{P_i} \in \mathbb R^d$ of $\mathbf{P}$ as a separate optimization problem,
    \begin{equation*}
    \mathop{\mathrm{argmin\:}}_{\substack{\mathbf{P_i} \in \mathbb{R}^d}} \E \Big [ \alpha_i \big( \mathbf{P_i}^T\rv X - \rv X_i \big)^2 \Big ]\quad \mathrm{subject\:to}\:\: \mathrm{Cov}(\mathbf{P_i}^T\rv X, \rv Z) = \mathbf{0},
    \end{equation*}
    at which point the weights $\alpha_i$ of each subproblem become irrelevant, and our objective may as well be Euclidean, allowing us to view each row as an independent optimization problem not just in this basis, but from any convenient one.

    So now let $\ell = \rank(\sxz) = \rank(\swxz)$ and $m = \rank(\sxx) = \rank(\swxwx)$, and consider a (new) orthonormal basis whose first $m$ coordinates span the column (and row) space of $\mathbf W$ (i.e. the subspace of $\mathbb R^d$ in which $\rv X$ and $\mathbf{W} \rv X$ have nonzero variance), and whose first $\ell \le m$ coordinates span the column space of $\swxz$ (i.e. the subspace of $\mathbb R^d$ in which $\mathbf{W} \rv X$ has nonzero covariance with $\rv Z$).

    Any component of $\rv X$ can be (almost surely) written as a fixed linear combination of the nontrivial components of its whitening $\mathbf{W}\rv X$:
    \begin{align*}
        \rv X_i = (\mathbf{W}^+\mathbf{W}\rv X)_i = \sum_{j=1}^m W_{ij}^+ (\mathbf{W}\rv X)_j.\tag{almost surely}
    \end{align*}
    Meanwhile, any component of $\mathbf{P}\rv X$ can be (always) written as a fixed linear combination of the nontrivial components of $\mathbf{W}\rv X$ and the almost surely zero components of $\rv X$:
    \begin{align*}
        (\mathbf{P} \rv X)_i = \sum_{j=1}^m A_{ij} (\mathbf{W}\rv X)_j + \sum_{j=m+1}^d B_{ij} \rv X_j,
    \end{align*}
    i.e. $\mathbf{P} = \mathbf {AW} + \mathbf {BV}$, where $\mathbf{V} = \mathbf{I} - \mathbf{W}^+\mathbf{W}$ is the orthogonal projection onto $\rv X$'s almost surely zero components.

    The $i^\text{th}$ sub-objective is then:
    \begin{align*}
        \E \big( \mathbf{P_i}^T\rv X - \rv X_i \big)^2 = \E \Bigg[ \sum_{j=1}^m (A_{ij} - W_{ij}^+) (\mathbf{W}\rv X)_j \Bigg]^2
        = \sum_{j=1}^m (A_{ij} - W_{ij}^+)^2,
    \end{align*}
    where we have safely ignored the almost surely zero terms $B_{ij} \rv X_j$ ($j > m$), and used the fact that the first $m$ components of $\mathbf{W}\rv X$ have identity covariance matrix.

    $\mathbf P \rv X$ is almost surely equal to $\bf {AW} \rv X$, so our constraint $\mathrm{Cov}(\mathbf{P}\rv X, \rv Z) = \mathbf{0}$ is equivalent to $\mathbf{A}\swxz = \mathrm{Cov}(\mathbf{AW}\rv X, \rv Z) = \mathbf 0$, i.e. $A_{ij} = 0$ when $j \le \ell$, since the first $\ell$ components are those for which $\mathbf W \rv X$ correlates with $\rv Z$. Subject to this, the objective is minimized for $A_{ij} = W_{ij}^+$ when $j > \ell$, i.e. $\mathbf A = \mathbf{W}^+ (\mathbf I - \pwxz)$.
        
    The particular choice $\mathbf{B} = \mathbf{I}$ gives our solution $\mathbf{P^*} = \mathbf{I} - \mathbf{W}^+ \pwxz\mathbf{W}$, leaving the non-varying components of $\rv X$ intact (see Fig.~\ref{fig:leace-proj} for a visualization).
\end{proof}

The solution is unique except for columns corresponding to the components of $\rv X$ with zero variance, and rows corresponding to the zero-weighted components of the (pseudo) inner product $\mathbf{M}$.

\subsection{Covector Proof}\label{covector-proof}

\begin{proof}
     We assume without loss of generality that vectors in $\mathbb R^d$ are represented in a basis diagonalizing the inner product $\mathbf{M}$, so that $\langle \mathbf{x}, \mathbf{y} \rangle_\mathbf{M} = \sum_{i=1}^d m_{i}x_iy_i$ for fixed $m_1, \ldots, m_d \ge 0$. This allows us to treat each row $\mathbf{P_i} \in \mathbb R^d$ of $\mathbf{P}$ as a separate optimization problem,
    \begin{equation*}
    \mathop{\mathrm{argmin\:}}_{\substack{\mathbf{P_i} \in \mathbb{R}^d}} \E \Big [ m_i \big( \mathbf{P_i}^T\rv X - \rv X_i \big)^2 \Big ]\quad \mathrm{subject\:to}\:\: \mathrm{Cov}(\mathbf{P_i}^T\rv X, \rv Z) = \mathbf{0}.
    \end{equation*}
    Our objective only depends on $\mathbf{P_i}$ through its effect on the scalar random variable $\xi = \mathbf{P_i}^T\rv X$. All random variables\footnote{Strictly speaking, equivalence classes of almost surely equal random variables.} of the form $\zeta = \mathbf{u}_{\zeta}^T \rv X$ for some covector $\mathbf{u}_{\zeta}^T \in \R^d$ form a vector space $U$, which we equip with the covariance inner product $\langle \xi, \zeta\rangle_\mathrm{Cov} = \mathrm{Cov}(\xi, \zeta) = \E[\xi \zeta] = \mathbf{u}_\xi^T \sxx \mathbf{u}_\zeta$.

    By the linearity of covariance, the elements of $U$ uncorrelated with $\rv Z$ form a subspace $Z^\perp \subseteq U$. Note also that $\xi \in Z^\perp$ if and only if $\xi$'s covector $\mathbf{u}_\xi^T$ satisfies $\mathrm{Cov}(\mathbf{u}_\xi^T \rv X, \rv Z) = \mathbf{u}_\xi^T \sxz = \mathbf{0}_k$, and that these covectors themselves form the subspace $\mathrm{colsp}(\sxz)^\perp$ of $\R^d$.
    
    Our objective now reduces to finding a covector $\mathbf{P}_i^T$ that defines the orthogonal projection of $\rv X_i$ onto $Z^\perp$. The difficulty is that orthogonality of elements in $U$ is not equivalent to orthogonality of the corresponding covectors. We can fix this by changing the basis in which covectors are represented. Since $\rv X \in \mathrm{colsp}(\mathbf{W})$ a.s., we can write any element of $U$ as a linear form in $\mathbf{W}\rv X$ rather than $\rv X$ by applying the change-of-basis $\mathbf{u}_{\xi}' = \mathbf{W}^+ \mathbf{u}_{\xi}$ to every covector: $\xi = (\mathbf{u}_{\xi}')^T \mathbf{W}\rv X = \mathbf{u}_{\xi}^T \cancel{\mathbf{W}^+ \mathbf{W}} \rv X$ a.s.
    
    In this new basis, which is orthonormal under our covariance inner product, each component of $\rv X$ is written $\rv X_i = (\mathbf{W}^+)_i^T \mathbf{W}\rv X$ and the inner product of any two elements of $U$ is simply the Euclidean inner product of the corresponding covectors:\footnote{If $\sxx$ is full rank, there is a one-to-one correspondence between random variables in $U$ and covectors. In the singular case, we may choose the component of the covector inside $\ker(\sxx)$ arbitrarily, since it will make no difference to the inner product.}
    \begin{equation*}\label{inner-prod-equiv}
        \langle \xi, \zeta\rangle_\mathrm{Cov} = \mathrm{Cov}(\mathbf{u}_{\xi}'^T \mathbf{W} \rv X, \mathbf{u}_{\zeta}'^T \mathbf{W} \rv X) = \mathbf{u}_{\xi}'^T \cancel{\mathbf{W} \sxx \mathbf{W}} \mathbf{u}_{\zeta}' = \mathbf{u}_{\xi}'^T \mathbf{u}_{\zeta}'.
    \end{equation*}
    Since the two inner products are now equivalent, and $Z^\perp$ is precisely those random variables with covector $\mathbf{u}' \in \mathrm{colsp}(\mathbf{W}\sxz)^\perp$, the orthogonal projection of $\rv X_i$ onto $Z^\perp$ is also an orthogonal projection of its covector $(\mathbf{W}^+)_i^T$ onto $\mathrm{colsp}(\mathbf{W}\sxz)^\perp$:
    \begin{equation}\label{eq:equiv-ortho-projections}
        \hat{\rv X}_i = (\mathbf{W}^+)_i^T (\mathbf{I} - \pwxz) (\mathbf{W}\rv X)
    \end{equation}
    Putting all the components of $\rv X$ together, we have our final solution,
    \begin{equation*}
        \hat{\rv X} = (\mathbf{I} - \mathbf{W}^+ \pwxz \mathbf{W})\rv X,
    \end{equation*}
    which is almost surely equivalent to Eq.~\ref{eq:equiv-ortho-projections}, but keeps the non-varying components of $\rv X$ intact.
\end{proof}

\section{The Optimality of Oblique Projections}\label{app:oblique-optimality}

As noted in \autoref{sec:oblique}, the optimal affine erasure function $r(\mathbf{x}) = \mathbf{b} + \mathbf{Px}$ does \textit{not} in general use an orthogonal projection for the matrix $\mathbf{P}$. A simple example illustrates why. Let $d = 2, k = 1$ so that $\rv X$ takes values in $\mathbb R^2$ and $\rv Z$ takes values in $\mathbb R$, with the first feature $\rv X_1$ and the label $\rv Z$ each independently and uniformly distributed in $\{-1, +1\}$, and the second feature $\rv X_2$ simply equal to the sum $\rv X_2 = \rv X_1 + \rv Z$. A dataset reflecting such a distribution has four $(\mathbf x, \mathbf y)$ pairs:
$$([1, 2]^T, 1),\quad([1, 0]^T, -1),\quad([-1, 0]^T, 1),\quad([-1, -2]^T, -1)$$
In this case, \textit{all} of the information $\rv X$ has about $\rv Z$ resides in $\rv X_2$, so the minimally disruptive orthogonal projection which guards $\rv Z$ will nullify that component:
$$\mathbf{P}_\text{ortho} =
\begin{bmatrix}
    1 & 0 \\
    0 & 0
\end{bmatrix}$$
On the other hand, $\rv X_1$ contains some information about $\rv X_2$ (despite having no information about $\rv Z$), allowing a partial reconstruction of $\rv X_2$ while preserving full concept erasure:
$$\mathbf{P}_\text{oblique} =
\begin{bmatrix}
    1 & 0 \\
    1 & 0
\end{bmatrix}$$
Both methods fully erase the ability to predict $\rv Z$ from the data, however a simple calculation shows the second, oblique method to perform better as measured by mean squared edit distance:
\begin{align*}
\E \|\mathbf{P}_\text{ortho}\rv X - \rv X\|^2 = 2,\quad
\E \|\mathbf{P}_\text{oblique}\rv X - \rv X\|^2 = 1
\end{align*}

\section{Equivalence of Guardedness Definitions}\label{guard-eqv-def}

\citet{xu2020theory} define the \textbf{conditional $\mathcal V$-entropy} of $\rv Z$ given $\rv X$ as the lowest achievable cross-entropy loss predicting $\rv Z$ with a function of $\rv X$ in the predictor class $\mathcal V$. In our notation:
\begin{equation*}
    H_{\mathcal V}(\rv Z\ |\ \rv X) = \inf_{\theta \in \Theta} \E[\loss(\eta(\rv X; \vtheta), \rv Z)],
\end{equation*}
where $\loss(\eta, z) = - \log \frac {\exp(\eta_z)} {\sum_{i=1}^k \exp(\eta_i)}$ is the cross-entropy loss function.

They then define the (unconditional) \textbf{$\mathcal V$-entropy} $H_{\mathcal V}(\rv Z) = H_{\mathcal V}(\rv Z\ |\ \mathbf{0})$ to be the lowest achievable cross-entropy loss in the case of a constantly null random data variable. This is exactly our trivially attainable loss $\Ltriv$ (\cref{def:trivial_loss}).

Finally, they define the \textbf{$\mathcal V$-information} from $\rv X$ to $\rv Z$ as the reduction in $\mathcal V$-entropy as compared to using such a null random data variable:
\begin{equation*}
    I_{\mathcal V}(\rv X \to \rv Z) = H_{\mathcal V}(\rv Z) - H_{\mathcal V}(\rv Z\ |\ \rv X).
\end{equation*}
Using these notions, \citet{ravfogel2022linear} say that $\rv X$ is \textbf{$\epsilon$-guarded} with respect to $\mathcal V$ if $I_{\mathcal V}(\rv X~\to~\rv Z) < \epsilon$.

In \autoref{sec:eqv-guardedness-with-optimality-of-constant-predictors}, we showed the equivalence of guardedness (as we have defined it in \cref{def:guardedness}) to the optimality of the trivially attainable loss. That is, $\rv X$ $(\mathcal V, \mathfrak L)$-guards $\rv Z$ when $H_{\mathcal V}(\rv Z\ |\ \rv X) = \Ltriv = H_{\mathcal V}(\rv Z)$, in the case where $\mathfrak L$ is the singleton class consisting solely of the cross-entropy loss function. In the language of \cite{ravfogel2022linear}, $\rv X$ is $\epsilon$-guarded with respect to $\mathcal V$ for all $\epsilon > 0$.

\section{Constraining Norm Growth}\label{norm-growth}

In early concept scrubbing experiments (Sec.~\ref{sec:concept-scrubbing}), we found that at specific layers in some models, concept scrubbing with LEACE would cause the norm of the embedding to diverge, leading to NaN outputs. By contrast, SAL never caused divergence, even though it causes a larger disruption to model performance on average (Table~\ref{table:scrubbing-perplexity}). This is because SAL uses an orthogonal projection $\mathbf{Q}$, whose eigenvalues are thus all in $\{0, 1\}$, so the norm of the hidden state can never increase after erasure, while LEACE's oblique projection matrix $\mathbf P$ does generally have singular values greater than 1. To combine the superior average-case MSE of LEACE with the stability of SAL, we adopt a simple regularization heuristic. After constructing $\mathbf P$, we analytically compute the trace of the covariance matrix of the hidden states after applying $\mathbf P$. If $\mathrm{tr}(\mathbf{P \sxx P^T}) > \mathrm{tr}(\sxx)$, we solve a quadratic equation to find the convex combination $\mathbf{P}' = \alpha \mathbf{P} + (1 - \alpha) \mathbf{Q}$ such that $\mathrm{tr}(\sxx) = \mathrm{tr}(\mathbf{P' \sxx (P')^T})$. By Theorem~\ref{erasure-family}, the set of matrices which ensure linear guardedness is convex,\footnote{In fact, it is a subspace of $\R^{d \times d}$. For any matrices $\mathbf{A}, \mathbf{B} \in \R^{d \times d}$ such that $\mathbf{A}\sxz = \mathbf{0}$ and $\mathbf{B}\sxz = \mathbf{0}$, we have by linearity $(\alpha \mathbf{A} + \beta \mathbf{B})\sxz = \alpha \mathbf{A}\sxz + \beta \mathbf{B}\sxz = \alpha \mathbf{0} + \beta \mathbf{0} = \mathbf{0}$ for any scalars $\alpha$ and $\beta$.} so $\mathbf{P}'$ is guaranteed to be in the feasible set. Furthermore, since our mean squared error objective is convex, $\mathbf{P}'$ is guaranteed to have no worse MSE than $\mathbf{Q}$. We find this solves the divergence issue in practice.

\section{Oracle LEACE}
\label{app:oracle-leace}

\begin{wrapfigure}{r}{0.5\textwidth}
    \vspace{-3.5em}
    \centering
    \includegraphics[width=0.5\textwidth, clip]{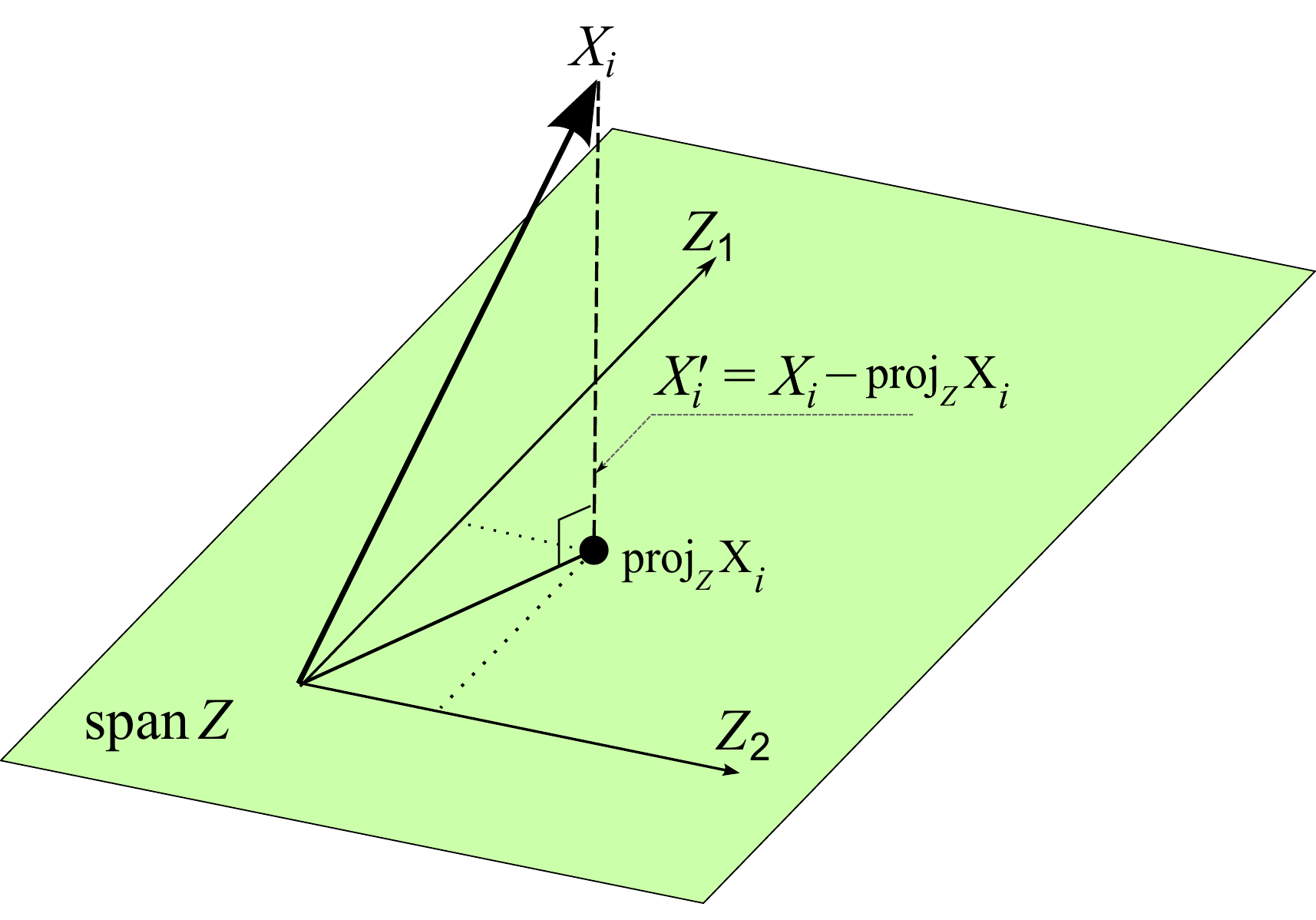}
    \caption{Orthogonal projection of $i$\textsuperscript{th} component of $\rv X$, itself a vector in the random variable Hilbert space $\mathcal H$, onto the span of the components of $\rv Z$. The residual $\rv X_i - \mathrm{proj}_{\mathcal Z} \rv X_i$ is the closest vector to $\rv X_i$ orthogonal to, and hence uncorrelated with, $\mathcal Z = \mathrm{span}(\{ \rv Z_1, \rv Z_2 \})$.
    }
    \vspace{-1.5em}
    \label{fig:hilbert}
\end{wrapfigure}

The concept erasure method derived in Section~\ref{sec:leace} does not require access to concept labels at inference time. That is, we can fit an erasure function on a labeled training dataset, then apply the function to unlabeled datapoints. If we have oracle access to the label $\zz$ for each $\xx$, we can achieve an even more surgical edit. In Theorem~\ref{thm:oleace} below, we derive \textbf{Oracle LEACE}, a closed-form formula for the the nearest $\rv X'$ to any $\rv X$ such that $\mathrm{Cov}(\rv X', \rv Z) = \textbf{0}$.

Like in Sec.~\ref{sec:leace}, the resulting $\rv X'_{\mathrm{LEACE}}$ is ``nearest'' to $\rv X$ with respect to all p.s.d. inner products $\mathbf a^T \mathbf{Mb}$ defined on $\R^d$ simultaneously. This is because, by expressing $\rv X$ in a basis that diagonalizes $\mathbf M$, we can decompose the problem into $d$ independent subproblems, one for each component of $\rv X_i$. Each subproblem can then be viewed as an orthogonal projection, not in $\R^d$, but in an abstract vector space of real-valued random variables. For geometric intuition, see Figure~\ref{fig:hilbert}.

Prior work has noted that computing an orthogonal projection in a random variable Hilbert space is equivalent to solving an ordinary least squares regression problem \cite{berkeley2018hilbert}. Our theorem is a natural extension of this work: we find that $\rv X'_{\mathrm{LEACE}}$ is equal to the OLS residual from regressing $\rv X$ on $\rv Z$, plus a constant shift needed to ensure that erasing $\rv Z$ does not change the mean of $\rv X$.

\begin{theorem}[Oracle Concept Erasure]\label{thm:oleace}
    Let $\mathcal H$ be the Hilbert space of square-integrable real-valued random variables equipped with the inner product $\langle \xi, \zeta \rangle_{\mathcal H} := \E[\xi \zeta]$. Let $(\rv X, \rv Z)$ be random vectors in $\mathcal H^d$ and $\mathcal H^k$ respectively. Then for every p.s.d. inner product $\langle \mathbf a, \mathbf b \rangle_{\mathbf M} = \mathbf a^T \mathbf M \mathbf b$ on $\R^d$, the objective
    \begin{align*}
        \mathop{\mathrm{argmin\:}}_{\substack{\rv X' \in \mathcal H^d}} \E \big\| \rv X' - \rv X \big\|^2_{\mathbf M} \quad \mathrm{subject\:to}\:\: \mathrm{Cov}(\rv X', \rv Z) = \mathbf{0}
    \end{align*}
    is minimized by the (appropriately shifted) ordinary least squares residuals from regressing $\rv X$ on $\rv Z$:
    \begin{align*}
        \rv X'_{\mathrm{LEACE}} = \rv X - \sxz \szz^+ \big( \rv Z - \E[\rv Z] \big).
    \end{align*}
\end{theorem}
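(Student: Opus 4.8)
The plan is to mimic the structure of the proof of Theorem~\ref{l2-optimality}, but now working directly in the random-variable Hilbert space $\mathcal H$ rather than having to pass through a whitening transformation. First I would diagonalize $\mathbf M$: by choosing a basis of $\R^d$ in which $\langle \mathbf a, \mathbf b\rangle_{\mathbf M} = \sum_i m_i a_i b_i$ with $m_i \ge 0$, the objective $\E\|\rv X' - \rv X\|^2_{\mathbf M}$ splits into $d$ independent scalar subproblems, one per component: minimize $m_i\,\E[(\rv X'_i - \rv X_i)^2]$ over $\rv X'_i \in \mathcal H$ subject to $\mathrm{Cov}(\rv X'_i, \rv Z) = \mathbf 0$. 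As in the covector proof, the weights $m_i$ become irrelevant to each subproblem (when $m_i = 0$ the component is unconstrained and we may as well keep $\rv X'_i = \rv X_i$), so each subproblem is just: find the closest element of $\mathcal H$ to $\rv X_i$ lying in the subspace of random variables uncorrelated with every component of $\rv Z$.

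The key observation is that, unlike in Theorem~\ref{l2-optimality} where the feasible set was $\{\xi = \mathbf u^T \rv X\}$ and the covariance inner product on covectors disagreed with the Euclidean one, here the feasible set is all of $\mathcal H$, and $\mathcal H$ is a genuine Hilbert space under $\langle \xi,\zeta\rangle_{\mathcal H} = \E[\xi\zeta]$. The constraint $\mathrm{Cov}(\rv X'_i, \rv Z) = \mathbf 0$ is not quite ``orthogonal to $\mathrm{span}\{\rv Z_1,\dots,\rv Z_k\}$'' because covariance subtracts means; the clean way to handle this is to center: write $\rv Z^c = \rv Z - \E[\rv Z]$ and note $\mathrm{Cov}(\xi, \rv Z_j) = \langle \xi - \E[\xi],\ \rv Z^c_j\rangle_{\mathcal H} = \langle \xi,\ \rv Z^c_j\rangle_{\mathcal H}$ since constants are orthogonal to the centered $\rv Z^c_j$. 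So the feasible set for the $i$th subproblem is exactly the affine-closed subspace $\{\xi \in \mathcal H : \xi \perp \mathcal Z^c\}$ where $\mathcal Z^c = \mathrm{span}\{\rv Z^c_1,\dots,\rv Z^c_k\}$, a genuine closed subspace of $\mathcal H$. By the Hilbert projection theorem, the minimizer is $\rv X'_i = \rv X_i - \mathrm{proj}_{\mathcal Z^c}(\rv X_i)$, i.e. $\rv X_i$ minus its orthogonal projection onto $\mathcal Z^c$ (see Figure~\ref{fig:hilbert}).

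It then remains to identify $\mathrm{proj}_{\mathcal Z^c}(\rv X_i)$ with the OLS prediction. The projection onto $\mathcal Z^c$ is the unique linear combination $\sum_j c_j \rv Z^c_j$ such that the residual $\rv X_i - \sum_j c_j \rv Z^c_j$ is orthogonal to every $\rv Z^c_\ell$; these are precisely the normal equations $\mathrm{Cov}(\rv Z^c, \rv Z^c)\, \mathbf c = \mathrm{Cov}(\rv X_i, \rv Z^c)$, i.e. $\szz\, \mathbf c = (\sxz)_{i,:}^T$, whose minimum-norm solution is $\mathbf c = \szz^+ (\sxz)_{i,:}^T$. (One should check that $\szz^+$ genuinely solves this: the vector $(\sxz)_{i,:}^T = \mathrm{Cov}(\rv X_i, \rv Z)$ lies in $\mathrm{colsp}(\szz)$ because it equals the image under $\mathrm{Cov}(\cdot, \rv Z^c)$ of a random variable, and $\mathrm{colsp}(\szz)$ is exactly the set of such images — this is the only slightly delicate point, and it is where the pseudoinverse is doing real work in the singular case.) Assembling the components, $\mathrm{proj}_{\mathcal Z^c}(\rv X_i) = (\sxz \szz^+ \rv Z^c)_i$, so $\rv X'_{\mathrm{LEACE}} = \rv X - \sxz\szz^+(\rv Z - \E[\rv Z])$, as claimed; I would close by verifying directly that this $\rv X'$ is feasible, $\mathrm{Cov}(\rv X'_{\mathrm{LEACE}}, \rv Z) = \sxz - \sxz\szz^+\szz = \mathbf 0$ (using $\szz^+\szz$ acting as identity on $\mathrm{colsp}(\szz) \supseteq \mathrm{rowsp}(\sxz)$), which also retroactively justifies that the subproblem minimizers are mutually consistent.

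The main obstacle I anticipate is the bookkeeping around the pseudoinverse and singular $\szz$: making sure that $\mathrm{Cov}(\rv X_i, \rv Z)$ really lands in $\mathrm{colsp}(\szz)$ so that $\szz\szz^+$ fixes it, and that the ``orthogonal to $\mathcal Z^c$'' characterization survives when the $\rv Z^c_j$ are linearly dependent (it does, since projection onto a span is well-defined regardless of whether the spanning set is a basis). The diagonalization-of-$\mathbf M$ reduction and the Hilbert projection theorem are standard and should be quick.
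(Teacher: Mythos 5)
Your proposal is correct and follows essentially the same route as the paper's proof: diagonalize $\mathbf M$, split the objective into per-component subproblems in $\mathcal H$, obtain the constrained minimizer as the residual of the orthogonal projection onto $\mathrm{span}\{\rv Z_j - \E[\rv Z_j]\}$, and identify it with OLS via the normal equations and $\szz^+$ (your observation that $\mathrm{Cov}(\xi, \rv Z_j) = \langle \xi, \rv Z_j - \E[\rv Z_j]\rangle_{\mathcal H}$ merely compresses the paper's explicit ``factoring out constants'' step). One small nit: for components with $m_i = 0$ you cannot literally keep $\rv X_i' = \rv X_i$, since the covariance constraint still binds on those coordinates; this is harmless here because the claimed minimizer applies the residual formula in every coordinate and is therefore feasible and optimal regardless.
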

\begin{proof}
    Assume w.l.o.g. that $\rv X$ and $\rv X'$ are represented in a basis diagonalizing $\mathbf{M}$, so we may write
    \begin{align*}
        \E \big\| \rv X' - \rv X \big\|^2_{\mathbf M} = \sum_{i=1}^d m_i \: \E \big[ (\rv X'_i - \rv X_i)^2 \big],
    \end{align*}
    where $m_1, \ldots, m_d \ge 0$ are eigenvalues of $\mathbf{M}$. Crucially, each term in this sum is independent from the others, allowing us to decompose the primal problem into $d$ separate subproblems of the form $\| \rv X_i' - \rv X_i \|^2_{\mathcal H}$, one for each component $i$ of $(\rv X, \rv X')$.

    \paragraph{Factoring out constants.} Now consider the subspace $\mathcal C = \mathrm{span}(1) \subset \mathcal H$ consisting of all constant (i.e. zero variance) random variables. Orthogonally decomposing $\rv X_i$ along $\mathcal C$ yields $\rv X_i = \Tilde{\rv X}_i + \mu_i$, where $\mu_i = \E[\rv X_i] \in \mathcal C$ and $\Tilde{\rv X}_i = \rv X - \E[\rv X]_i \in \mathcal C^\perp$, and likewise for $\rv X_i'$. Our objective is now
\begin{equation}\label{eq:decomposed}
        \big \| \rv X_i' - \rv X_i \big \|^2_{\mathcal H} =
        \big \| \mu_i' - \mu_i \big \|^2_{\mathcal H} + \big \| \Tilde{\rv X}_i' - \Tilde{\rv X}_i \big \|^2_{\mathcal H}.
    \end{equation}
    Since $\mu_i'$ and $\mu_i$ are orthogonal to $\Tilde{\rv X}_i'$ and $\Tilde{\rv X}_i$, and the constraint $\mathrm{Cov}(\rv X', \rv Z) = \mathbf{0}$ is invariant to constant shifts, we can optimize the two terms in Eq.~\ref{eq:decomposed} independently. The first term is trivial: it is minimized when $\mu_i' = \mu_i$, and hence $\rv X_i' = \Tilde{\rv X}_i' + \E[\rv X_i]$.
    \paragraph{Orthogonal projection.} We can now rewrite the zero covariance condition as an orthogonality constraint on $\Tilde{\rv X}_i$. Specifically, for every $i \in \{1\ldots d\}$ we have
\begin{equation}\label{eq:dual-problem}
    \mathop{\mathrm{argmin\:}}_{\substack{\Tilde{\rv X}_i' \in \mathcal H}} \big \| \Tilde{\rv X}_i' - \Tilde{\rv X}_i \big \|^2_{\mathcal H} \quad \mathrm{s.t.}\:\: \forall j \in \{1\ldots k\} : \langle \Tilde{\rv X}_i', \Tilde{\rv Z}_j \rangle_{\mathcal H} = 0,
    \end{equation}
    where $\Tilde{\rv Z} = \rv Z - \E[\rv Z]$. In other words, we seek the nearest $\Tilde{\rv X}_i'$ to $\Tilde{\rv X}_i$ orthogonal to $\mathcal Z = \mathrm{span}(\{ \Tilde{\rv Z}_1, \ldots, \Tilde{\rv Z}_k \})$, which is simply the orthogonal projection of $\Tilde{\rv X}_i$ onto $\mathcal Z^\perp$. This in turn is equal to the ordinary least squares residual from regressing $\Tilde{\rv X}$ on $\Tilde{\rv Z}$:
\begin{equation}\label{eq:centered}
        \Tilde{\rv X}_i' = \Tilde{\rv X}_i - \mathrm{proj} \big(\Tilde{\rv X}_i, \mathcal Z \big) = \rv X_i - (\sxz)_i \szz^+ (\rv Z - \E[\rv Z]) - \E[\rv X_i].
    \end{equation}
    \paragraph{Putting it all together.} Plugging Eq.~\ref{eq:centered} into $\rv X_i' = \Tilde{\rv X}_i' + \E[\rv X_i]$ and combining all components into vector form yields
    \begin{equation}
        \rv X'_{\mathrm{LEACE}} = \rv X - \sxz \szz^+ (\rv Z - \E[\rv Z]),
    \end{equation}
    which completes the proof.
\end{proof}

\section{Notation Key}

\begin{tabular}{r l}
    $\mathcal Z$ & The space of one-hot labels $\{(z_1, \ldots z_k) \in \mathbb \{0, 1\}^k\ \big|\ \sum_{j=1}^k z_j = 1\}\}$ \\
    & (treated interchangeably with the integers $\{1, \ldots, k\}$ when convenient).\\
    $\rv X, \rv Z$ & Integrable (i.e. finite first moment) random vectors taking values in $\mathbb R^d$ and $\mathbb R^k$\\ & respectively (or their realized values inside an expectation, e.g. in $\E[f(\rv X)]$).\\
    & $\rv Z$ is sometimes restricted to the one-hot labels $\mathcal Z$, in which case we assume\\
    & each $\mathbb P(\rv Z = j) > 0$. \\
    $\rv X_i, \rv Z_j$ & The $i^\text{th}$ and $j^\text{th}$ components thereof, themselves scalar random variables (or\\
    & their realized values inside an expectation). \\

    $\xi, \zeta$ & Scalar random variables taking values in $\mathbb R$.\\
    $\eta$ & A predictor function $\mathbb R^d \to \mathcal Z$ (or its value $\eta(\rv X)$ when inside an expectation). \\

    $\mathcal V$ & A space of predictor functions $\{\eta(\cdot; \vtheta): \mathbb R^d \to \mathbb R^k\ |\ \vtheta \in \Theta\}$, parameterized\\
    & by $\vtheta$ and containing all constant functions. \\
    $\mathfrak L$ & A space of loss functions $\{\loss: \mathbb R^k \times \mathcal Z \to [0, \infty)\}$. \\

    $r$ & An erasure function $\mathbb R^d \to \mathbb R^d$, hopefully making a minimal edit to $\rv X$ that\\
    &eliminates the ability to predict labels $\rv Z$ with predictors in $\mathcal V$.\\

    $\mathbf{A}$ & A matrix with entries in $\mathbb R$. \\
    $A_{ij}$ & The entry thereof at the $i^\text{th}$ row and $j^\text{th}$ column. \\
    $\mathbf{A}^+$ & The Moore-Penrose pseudoinverse of $\mathbf{A}$.\\
    $\mathbf{v}$ & A column vector with entries in $\mathbb R$. \\
    $v_i$ & The $i^\text{th}$ component thereof. \\
\end{tabular}

\end{document}